\documentclass{article}

\usepackage{arxiv}

\usepackage[utf8]{inputenc} 
\usepackage[T1]{fontenc}    
\usepackage[hidelinks]{hyperref}       
\usepackage{url}            
\usepackage{microtype}      
\usepackage{doi}
\usepackage{multirow}

\usepackage{graphicx}
\usepackage[acronym]{glossaries}
\usepackage{colortbl}
\usepackage{nicefrac}
\usepackage{subcaption}
\usepackage{booktabs}
\usepackage{xcolor}
\usepackage[linesnumbered,ruled,vlined]{algorithm2e}
\usepackage{lipsum}
\usepackage{wrapfig}
\usepackage{fontawesome}
\usepackage{tikzit}
\DontPrintSemicolon

\definecolor{myorange}{rgb}{0.906,0.435,0.317}
\definecolor{myblue}{rgb}{0.0,0.314,0.408}


\SetKwComment{Comment}{\color{green!50!black}// }{}

\SetKwProg{Function}{function}{}{}

\usepackage{amsmath, amsfonts, dsfont, amsthm}

\renewcommand{\min}[1]{\underset{#1}{\text{min}}\,}

\renewcommand{\inf}[1]{\underset{#1}{\text{inf}}\,}

\newcommand{\iid}[0]{\overset{iid}{\sim}}
\newcommand{\argmin}[1]{\underset{#1}{\text{argmin}}\,}
\newcommand{\argmax}[1]{\underset{#1}{\text{argmax}}\,}
\newcommand{\arginf}[1]{\underset{#1}{\text{arginf}}\,}

\newcommand{\trace}[0]{\text{Tr}}
\newcommand{\diag}[0]{\text{diag}}

\title{Lighter, Better, Faster Multi-Source Domain Adaptation with Gaussian Mixture Models and Optimal Transport}

\date{}

\newif\ifuniqueAffiliation
\uniqueAffiliationtrue

\ifuniqueAffiliation 
\author{
Eduardo Fernandes Montesuma\\
CEA, List\\
Université Paris-Saclay\\
F-91120 Palaiseau, France
\And
Fred Ngolè Mboula\\
CEA, List\\
Université Paris-Saclay\\
F-91120 Palaiseau, France
\And
Antoine Souloumiac\\
CEA, List\\
Université Paris-Saclay\\
F-91120 Palaiseau, France
}
\else
\usepackage{authblk}

\setlength{\affilsep}{0em}
\newbox{\orcid}\sbox{\orcid}{\includegraphics[scale=0.06]{orcid.pdf}} 
\author[1]{%
	\href{https://orcid.org/0000-0000-0000-0000}{\usebox{\orcid}\hspace{1mm}David S.~Hippocampus\thanks{\texttt{hippo@cs.cranberry-lemon.edu}}}%
}
\author[1,2]{%
	\href{https://orcid.org/0000-0000-0000-0000}{\usebox{\orcid}\hspace{1mm}Elias D.~Striatum\thanks{\texttt{stariate@ee.mount-sheikh.edu}}}%
}
\affil[1]{Department of Computer Science, Cranberry-Lemon University, Pittsburgh, PA 15213}
\affil[2]{Department of Electrical Engineering, Mount-Sheikh University, Santa Narimana, Levand}
\fi


\hypersetup{
pdftitle={Gaussian Mixture Model Dictionary Learning},
pdfsubject={stat.ML},
pdfauthor={Eduardo Fernandes Montesuma,Fred Ngol\'e Mboula,Antoine Souloumiac},
pdfkeywords={Domain Adaptation,Optimal Transport Gaussian Mixture Models},
}

\newacronym{ot}{OT}{Optimal Transport}
\newacronym{eot}{EOT}{Empirical Optimal Transport}
\newacronym{dil}{DiL}{Dictionary Learning}
\newacronym{ml}{ML}{Machine Learning}
\newacronym{erm}{ERM}{Empirical Risk Minimization}
\newacronym{da}{DA}{Domain Adaptation}
\newacronym{tl}{TL}{Transfer Learning}
\newacronym{msda}{MSDA}{Multi-Source DA}
\newacronym{nmf}{NMF}{Nonlinear Matrix Factorization}
\newacronym{sota}{SOTA}{State-of-the-Art}
\newacronym{wdl}{WDL}{Wasserstein Dictionary Learning}
\newacronym{dadil}{DaDiL}{Dataset Dictionary Learning}
\newacronym{em}{EM}{Expectation-Maximization}
\newacronym{gmm}{GMM}{Gaussian Mixture Model}
\newacronym{mw}{MW}{Mixture Wasserstein}

\newacronym{tca}{TCA}{Transfer Component Analysis}
\newacronym{otda}{OTDA}{Optimal Transport Domain Adaptation}
\newacronym{sa}{SA}{Subspace Alignment}
\newacronym{coral}{CORAL}{Correlation Alignment}
\newacronym{jdot}{JDOT}{Joint Distribution Optimal Transport}
\newacronym{wbt}{WBT}{Wasserstein Barycenter Transport}

\newacronym{dann}{DANN}{Domain Adversarial Neural Network}
\newacronym{wdgrl}{WDGRL}{Wasserstein Distance Guided Representation Learning}
\newacronym{mcd}{MCD}{Maximum Classifier Discrepancy}
\newacronym{mdd}{MDD}{Margin Disparity Discrepancy}
\newacronym{wjdot}{WJDOT}{Weighted JDOT}
\newacronym{m3sda}{M3SDA}{Moment Matching for MSDA}

\newacronym{tsne}{t-SNE}{t-Stochastic Neighbor Embeddings}
\newacronym{nn}{NN}{Neural Net}
\newacronym{pot}{POT}{Python Optimal Transport}

\newacronym{map}{MAP}{Maximum a Posteriori}
\newacronym{sgd}{SGD}{Stochastic Gradient Descent}

\newacronym{jcpot}{JCPOT}{Joint Class Proportion and Optimal Transport}

\tikzstyle{trapezium}=[fill=white, draw=black, shape=trapezium, rotate=-90, minimum height=1cm]
\tikzstyle{lossbox}=[fill={rgb,255: red,202; green,206; blue,255}, draw=black, shape=rectangle, minimum height=1.2cm, minimum width=1cm, align=center]
\tikzstyle{clfbox}=[fill=white, draw=black, shape=rectangle, minimum width=1cm, minimum height=1cm]
\tikzstyle{new style 2}=[fill=white, draw=black, shape=rectangle, align=center]
\tikzstyle{domainbox}=[fill=white, draw=black, shape=rectangle, minimum width=3cm, align=center]
\tikzstyle{longbox}=[fill=white, draw=black, shape=rectangle, minimum height=4cm, minimum width=1.2cm, align=center]
\tikzstyle{rotatednode}=[rotate=90]
\tikzstyle{circularnode}=[fill=none, draw=black, shape=circle]
\tikzstyle{blue_circle}=[fill={rgb,255: red,0; green,80; blue,104}, draw=none, shape=circle, minimum width=0.5cm]
\tikzstyle{orangecircle1}=[fill={rgb,255: red,231; green,111; blue,81}, draw=none, shape=circle, minimum width=0.5cm]
\tikzstyle{blue_square1}=[fill={rgb,255: red,0; green,80; blue,104}, draw=none, shape=rectangle, minimum width=0.5cm, minimum height=0.5cm]
\tikzstyle{blue_triangle1}=[fill={rgb,255: red,0; green,80; blue,104}, draw=none, shape=regular polygon, regular polygon sides=3]
\tikzstyle{widebox}=[fill=white, draw=black, shape=rectangle, minimum height=1.2cm, minimum width=10cm, align=center]
\tikzstyle{labeled domain}=[fill=none, draw={rgb,255: red,0; green,80; blue,104}, shape=circle, minimum width=1cm]
\tikzstyle{unlabeled domain}=[fill=none, draw={rgb,255: red,231; green,111; blue,81}, shape=circle, minimum width=1cm]
\tikzstyle{smallwidebox}=[fill=white, draw=black, shape=rectangle, minimum height=1.2cm, minimum width=5cm, align=center]

\tikzstyle{red edge}=[->, fill=none, draw={rgb,255: red,128; green,0; blue,0}]
\tikzstyle{blue edge}=[->, fill=none, draw={rgb,255: red,70; green,130; blue,180}]
\tikzstyle{green edge}=[->, fill=none, draw={rgb,255: red,44; green,160; blue,44}]
\tikzstyle{red dotted edge}=[->, dashed, fill=none, draw={rgb,255: red,128; green,0; blue,0}, thick]
\tikzstyle{blue dotted edge}=[->, dashed, fill=none, draw={rgb,255: red,70; green,130; blue,180}]
\tikzstyle{green dotted edge}=[->, dashed, fill=none, draw={rgb,255: red,44; green,160; blue,44}, thick]
\tikzstyle{red dotted line}=[-, fill=none, dashed, draw={rgb,255: red,128; green,0; blue,0}]
\tikzstyle{blue dotted line}=[-, fill=none, dashed, draw={rgb,255: red,70; green,130; blue,180}]
\tikzstyle{green dotted line}=[-, fill=none, dashed, draw={rgb,255: red,44; green,160; blue,44}]
\tikzstyle{black edge}=[->]
\tikzstyle{black dashed line}=[-, dashed]
\tikzstyle{thick black arrow}=[->, thick]
\tikzstyle{thick black edge}=[-, thick]
\tikzstyle{thick black dotted line}=[->, thick, dashed]
\tikzstyle{semi transparent dashed black line}=[-, opacity=0.2, dashed]

\newtheorem{definition}{Definition}

\newtheorem{theorem}{Theorem}

\begin{document}
\maketitle

\begin{abstract}
In this paper, we tackle Multi-Source Domain Adaptation (MSDA), a task in transfer learning where one adapts multiple heterogeneous, labeled source probability measures towards a different, unlabeled target measure. We propose a novel framework for MSDA, based on Optimal Transport (OT) and Gaussian Mixture Models (GMMs). Our framework has two key advantages. First, OT between GMMs can be solved efficiently via linear programming. Second, it provides a convenient model for supervised learning, especially classification, as components in the GMM can be associated with existing classes. Based on the GMM-OT problem, we propose a novel technique for calculating barycenters of GMMs. Based on this novel algorithm, we propose two new strategies for MSDA: GMM-Wasserstein Barycenter Transport (WBT) and GMM-Dataset Dictionary Learning (DaDiL). We empirically evaluate our proposed methods on four benchmarks in image classification and fault diagnosis, showing that we improve over the prior art while being faster and involving fewer parameters\footnote{\faGithub~Our code is publicly available at \url{https://github.com/eddardd/gmm\_msda}}.
\keywords{Domain Adaptation  \and Optimal Transport \and Gaussian Mixture Models.}
\end{abstract}
\section{Introduction}

Supervised learning models, especially deep neural nets, rely on large amounts of labeled data to learn a function that reliably predicts on unseen data. This property is known as \emph{generalization}. However, these models are subject to performance degradation, when the conditions upon which test data is acquired changes. This issue is known in the literature as distributional, or dataset shift~\cite{quinonero2008dataset}.

Under distributional shift, a possible solution is to acquire a new labeled dataset under the new conditions. This solution is, in many cases such as fault diagnosis~\cite{montesuma2022cross}, costly or infeasible. A different approach, known as \gls{da}, consists of collecting an unlabeled \emph{target domain} dataset, for which the knowledge in the \emph{source domain} dataset is transferred to~\cite{pan2009survey}. A way to further enhance this adaptation is to consider multiple related, but heterogeneous sources, which is known as \gls{msda}~\cite{crammer2008learning}.

In the context of \gls{da}, a prominent framework is \gls{ot}~\cite{villani2009optimal}, which is a field of mathematics concerned with the displacement of mass at least effort. This theory has been applied for \gls{da} in multiple ways, especially by (i) mapping samples between domains~\cite{courty2016optimal} and (ii) learning invariant representations~\cite{shen2018wasserstein}. For \gls{msda}, \gls{ot} has been used for aggregating the multiple source domains into a barycentric domain~\cite{montesuma2021icassp,montesuma2021cvpr}, which is later transported to the target domain, or by weighting source domain measures~\cite{turrisi2022multi}. Our work considers the problem of \gls{wdl}, initially proposed by~\cite{schmitz2018wasserstein} for histogram data. This problem was later generalized by~\cite{montesuma2023learning}, for empirical measures, which allowed its application to \gls{msda}. In~\cite{montesuma2023learning}, one expresses domains in \gls{msda} as a barycenter of atom measures, which have a free, learnable support. As a result, the work of~\cite{montesuma2023learning} \emph{learns how to interpolate distributional shift} between the measures in \gls{msda}.

\begin{figure}[ht]
    \centering
    \begin{subfigure}[t]{0.4\linewidth}
        \resizebox{0.9\linewidth}{!}{\tikzfig{gmm_wbt}}
        \caption{GMM-WBT}
    \end{subfigure}\hfill
    \begin{subfigure}[t]{0.52\linewidth}
        \resizebox{\linewidth}{!}{\tikzfig{gmm_dadil}}
        \caption{GMM-DaDiL}
    \end{subfigure}
    \caption{\textbf{Overview of proposed methods.} {\faDatabase} represent datasets, circles represent barycenters and triangles represent learned measures. Blue and orange elements represent labeled and unlabeled measures respectively. In \gls{gmm}-\gls{wbt}, a labeled \gls{gmm} is determined for the target domain by transporting the barycenter of sources. In \gls{gmm}-\gls{dadil}, we learn to express each domain as a barycenter of learned \glspl{gmm}, called atoms, through dictionary learning.}
    \label{fig:overview_methods}
\end{figure}

However, previous algorithms relying on Wasserstein barycenters, such as \gls{wbt}~\cite{montesuma2021icassp,montesuma2021cvpr} and \gls{dadil}~\cite{montesuma2023learning}, are limited in scale, since the number of points the support of the empirical measures scale with the number of samples in the original datasets. As a consequence, previous works such as~\cite{montesuma2021icassp,montesuma2021cvpr} are limited to small scale datasets, or rely on mini-batch optimization~\cite{montesuma2023learning}, which introduces artifacts in the \gls{ot}. To tackle these limitations, in this paper we propose a novel, parametric framework for barycentric-based \gls{msda} algorithms. based on \gls{ot} between \glspl{gmm}~\cite{delon2020wasserstein}. We present an overview of our methods in figure~\ref{fig:overview_methods}.

Our contributions are threefold: 1. We propose a novel strategy for mapping the parameters of \glspl{gmm} using \gls{ot} (section~\ref{sec:first_order_mw2} and theorem~\ref{thm:first_order_mw2}); 2. We propose a novel algorithm for computing mixture-Wasserstein barycenters of \glspl{gmm} (algorithm~\ref{alg:gmmot_bary} in section~\ref{sec:mixture_wbary}); 3. We propose an efficient parametric extension of the \gls{wbt} and \gls{dadil} algorithms based on \glspl{gmm} (section~\ref{sec:msda_gmm}). We highlight that, while \glspl{gmm} were previously employed in single source \gls{da}~\cite{gardner2022domain,montesuma2024gmmotda}, to the best of our knowledge this is the first work to leverage \gls{gmm}-\gls{ot} for \gls{msda}.

The rest of this paper is divided as follows. Section~\ref{sec:preliminaries} covers the background behind our method. Section~\ref{sec:methodological_contributions} covers our methodological contributions. Section~\ref{sec:experiments} explores the empirical validation of our method with respect other \gls{ot}-based \gls{msda} algorithms, where we show that our methods significantly outperform prior art. Finally, section~\ref{sec:conclusion} concludes this paper.
\section{Preliminaries}\label{sec:preliminaries}

\subsection{Gaussian Mixtures}

We denote the set of probability measures over a set $\mathcal{X}$ as $\mathbb{P}(\mathcal{X})$. A Gaussian measure corresponds to $P_{\theta} \in \mathbb{P}(\mathcal{X})$ with density,
\begin{align*}
    f_{\theta}(\mathbf{x}) &= \dfrac{1}{\sqrt{(2\pi)^{d}\det(\mathbf{C}^{(P)})}}\exp\biggr(-\dfrac{1}{2}(\mathbf{x}-\mathbf{m}^{(P)})^{T}(\mathbf{C}^{(P)})^{-1}(\mathbf{x}-\mathbf{m}^{(P)})\biggr),
\end{align*}
where $\theta = (\mathbf{m}^{(P)}, \mathbf{C}^{(P)})$ are the mean vector $\mathbf{m}^{(P)} \in \mathbb{R}^{d}$ and the covariance matrix $\mathbf{C}^{(P)} \in \mathbb{S}^{d} = \{\mathbf{C} \in \mathbb{R}^{d \times d}: \mathbf{C} = \mathbf{C}^{T}\text{ and }\mathbf{x}\mathbf{C}\mathbf{x}^{T} > 0, \forall \mathbf{x} \in \mathbb{R}^{d}\setminus \{\mathbf{0}\} \}$. We generally denote $P_{\theta} = \mathcal{N}(\mathbf{m}^{(P)}, \mathbf{C}^{(P)})$. In addition, let $K \geq 1$ be an integer. A \gls{gmm} over $\mathbb{R}^{d}$ is a probability measure $P_{\theta} \in \mathbb{P}(\mathbb{R}^{d})$ such that,
\begin{align}
    P_{\theta} = \sum_{k=1}^{K}p_{k}P_{k}\text{, where }P_{k} = \mathcal{N}(\mathbf{m}_{k}^{(P)},\mathbf{C}_{k}^{(P)})\text{, and }\mathbf{p} \in \Delta_{K},\label{eq:gmm}
\end{align}
where $\Delta_{K} = \{\mathbf{p} \in \mathbb{R}^{K}_{+}:\sum_{k=1}^{K}p_{k}=1\}$. Following~\cite{delon2020wasserstein}, we denote the subset of $\mathbb{P}(\mathbb{R}^{d})$ of probability measures which can be written as Gaussian mixtures with less than $K$ components by $\text{GMM}_{d}(K)$, and $\text{GMM}_{d}(\infty) = \cup_{k \geq 0}\text{GMM}_{d}(K)$.

Given data points $\{\mathbf{x}_{i}^{(P)}\}_{i=1}^{n}$ i.i.d. from $P$, one can determine the parameters $\theta$ through maximum likelihood,
\begin{align}
    \theta^{\star} &= \argmax{\theta \in \Theta} \sum_{i=1}^{n}\log P_{\theta}(\mathbf{x}_{i}^{(P)}),\label{eq:ml_gmm}
\end{align}
where $\Theta = \{\{p_{k},\mathbf{m}_{k}^{(P)},\mathbf{C}_{k}^{(P)}\}_{k=1}^{K}:\mathbf{m}_{k}^{(P)} \in \mathbb{R}^{d}\text{ and }\mathbf{C}_{k}^{(P)} \in \mathbb{S}^{d}\}$. While equation~\ref{eq:ml_gmm} has no closed-form solution, one can solve this optimization problem through the celebrated \gls{em} algorithm~\cite{dempster1977maximum}.

\subsection{Domain Adaptation}

In this paper, we focus on the problem of classification. Given a feature space $\mathcal{X} = \mathbb{R}^{d}$ and a label space $\mathcal{Y} = \{1,\cdots,n_{cl}\}$, this problem corresponds to finding $h \in \mathcal{H} \subset \mathcal{Y}^{\mathcal{X}}$ that correctly classifies data $\{(\mathbf{x}_{i}^{(Q)}, y_{i}^{(Q)})\}_{i=1}^{n}$.

We use the \gls{erm} framework~\cite{vapnik1991principles}, as it is useful for domain adaptation theory. As follows, one assumes $\mathbf{x}_{i}^{(Q)} \iid Q$, for a measure $Q \in \mathbb{P}(\mathcal{X})$, and $h_{0}:\mathcal{X}\rightarrow\mathcal{Y}$ such that $y_{i}^{(Q)} = h_{0}(\mathbf{x}_{i}^{(Q)})$. $h_{0}$ is called \emph{ground-truth labeling function}. Given a loss function $\mathcal{L}:\mathcal{Y}\times\mathcal{Y}\rightarrow\mathbb{R}$, a classifier may be defined through risk minimization, i.e., $h^{\star} = \text{argmin}_{h\in\mathcal{H}}\mathcal{R}_{Q}(h)$, for $\mathcal{R}_{Q}(h) = \mathbb{E}_{Q}[\mathcal{L}(h(\mathbf{x}), h_{0}(\mathbf{x}))]$. This strategy is oftentimes impractical as $Q$ and $h_{0}$ are unknown. As a result, one resorts to the minimization of the empirical risk, i.e., $\hat{h} = \text{argmin}_{h \in \mathcal{H}}\hat{\mathcal{R}}_{Q}(h)$, where $\hat{\mathcal{R}}_{Q}(h) = \dfrac{1}{n}\sum_{i=1}^{n}\mathcal{L}(h(\mathbf{x}_{i}^{(Q)}), y_{i}^{(Q)})$.

From a theoretical standpoint, this framework is useful because $\mathcal{R}_{Q}$ is bounded by $\hat{\mathcal{R}}_{Q}$ and a complexity term depending on the number of samples $n$, and the Vapnik-Chervonenkis dimension of $\mathcal{H}$~\cite[Section 6]{vapnik1991principles}. As a result, $\hat{h}$ minimizing the empirical risk is guaranteed to generalize to unseen samples of $Q$. Nevertheless, the assumption that unseen examples come from a fixed measure $Q$ is seldom verified in practice~\cite{quinonero2008dataset}, since the conditions upon which data is acquired may change. In this case, models are required to adapt to new data, but at the same time re-training a model from the scratch is likely costly and data intensive. A solution consists of using transfer learning~\cite{pan2009survey}, in which one re-uses knowledge from a source domain or task to facilitate the learning on a target domain or task.

In transfer learning, a domain is a pair $(\mathcal{X}, Q(X))$ of a feature space and a (marginal) probability measure. Likewise, a task is a pair $(\mathcal{Y}, Q(Y|X))$ of a label space and a conditional probability measure. Domain adaptation is a case in which one has two domains $(\mathcal{X}, Q_{S}(X))$, $(\mathcal{X}, Q_{T}(X))$, a single task $(\mathcal{Y}, Q(Y|X))$, and $Q_{S}(X) \neq Q_{T}(X)$. Furthermore, multi-source domain adaptation supposes multiple source domain measures, i.e., $Q_{S_{1}},\cdots,Q_{S_{N}}$, with $Q_{S_{i}} \neq Q_{S_{j}}$, and $Q_{S_{i}} \neq Q_{T}$. To reflect the idea that acquiring new data is costly, we have an unsupervised scenario. In this case, we have $N$ labeled source datasets $\{(\mathbf{x}_{i}^{(Q_{S_{\ell}})}, y_{i}^{(Q_{S_{\ell}})})\}_{i=1}^{n_{\ell}}$, and an unlabeled target dataset $\{\mathbf{x}_{i}^{(Q_{T})}\}_{i=1}^{n_{T}}$. Our goal is to learn a classifier on $Q_{T}$ by leveraging the knowledge from the source domains.

\subsection{Optimal Transport}

Optimal transport is a field of mathematics concerned with the displacement of mass at least effort~\cite{peyre2019computational,montesuma2023recent}. Given probability measures $P, Q \in \mathbb{P}(\mathcal{X})$, the Monge formulation~\cite[Section 2.2.]{peyre2019computational} of \gls{ot} seeks for a mapping $T$,
\begin{equation}
    T^{\star} = \arginf{T_{\sharp}P = Q}\int_{\mathcal{X}}c(x, T(x))dP(x),\label{eq:MongeFormulation}
\end{equation}
where $T_{\sharp}$ is the push-forward mapping of $T$, i.e., $T_{\sharp}P(A) = P(T^{-1}(A))$, and $c:\mathbb{R}^{d}\times\mathbb{R}^{d}\rightarrow\mathbb{R}$ is a ground-cost, that is, a measure of transportation effort. Nonetheless, this problem poses technical difficulties, mainly due the constraint $T_{\sharp}P = Q$. A more tractable formulation was proposed by Kantorovich~\cite[Section 2.3.]{peyre2019computational}, and relies on \gls{ot} plans,
\begin{equation}
    \gamma^{\star} = \arginf{\gamma\in\Gamma(P, Q)}\int_{\mathcal{X}}\int_{\mathcal{X}}c(x, z)d\gamma(x,z),\label{eq:KantorovichFormulation}
\end{equation}
where $\Gamma(P, Q) = \{\gamma \in \mathbb{P}(\mathcal{X}\times\mathcal{X}):\int_{\mathcal{X}}\gamma(A,z)=P(A)\text{, and }\int_{\mathcal{X}}\gamma(x,B)=Q(B)\}$ is called the transportation polytope. There is a metric between probability measures, associated with \gls{ot}, called Wasserstein distance~\cite{villani2009optimal}. As such, let $c(x,z) = d(x,z)^{\alpha}$ for $\alpha \in [1,\infty)$, where $d$ is a metric on $\mathcal{X}$, then,
\begin{equation}
    \mathcal{W}_{c,\alpha}(P, Q) = \biggr(\inf{\gamma\in\Gamma(P, Q)}\int_{\mathcal{X}}\int_{\mathcal{X}}c(x, z)d\gamma(x,z)\biggr)^{\nicefrac{1}{\alpha}}.
\end{equation}
When $\mathcal{X} = \mathbb{R}^{d}$, a common choice is $c(\mathbf{x},\mathbf{z}) = \lVert \mathbf{x} - \mathbf{z}\rVert_{2}^{\alpha}$, for which we omit the subscript $c$. Furthermore, common values for $\alpha$ include $1$ and $2$. Throughout this paper we adopt the Euclidean metric and $\alpha = 2$.

While equation~\ref{eq:KantorovichFormulation} is hard to solve for general $P$ and $Q$, it has closed-form solution for Gaussian measures~\cite{takatsu2011wasserstein}. As such, let $P = \mathcal{N}(\mathbf{m}^{(P)}, \mathbf{C}^{(P)})$ (resp. $Q$). Under these conditions, for $\mathbf{C}^{(P)} = \mathbf{S}^{(P)}(\mathbf{S}^{(P)})^{T}$,
\begin{equation*}
    \mathcal{W}_{2}(P, Q)^{2} = \lVert \mathbf{m}^{(P)} - \mathbf{m}^{(Q)} \rVert^{2}_{2} + \trace\biggr( \mathbf{C}^{(P)} + \mathbf{C}^{(Q)} - 2(\mathbf{S}^{(P)}\mathbf{C}^{(Q)}\mathbf{S}^{(P)})^{\nicefrac{1}{2}} \biggr),
\end{equation*}
This expression can be further simplified for axis-aligned Gaussians, i.e., $\mathbf{S}^{(P)} = \diag(\mathbf{s}^{(P)})$, with $\mathbf{s}^{(P)} \in \mathbb{R}^{d}_{+}$,
\begin{align}
    \mathcal{W}_{2}(P, Q)^{2} &= \lVert \mathbf{m}^{(P)} - \mathbf{m}^{(Q)} \rVert^{2}_{2} + \lVert 
\mathbf{s}^{(P)} - \mathbf{s}^{(Q)} \rVert_{2}^{2}.\label{eq:gauss_w2}
\end{align}
Henceforth, we assume axis-aligned Gaussian measures.

\noindent\emph{Remark.} Here, we give further insight into the hypothesis of using axis-aligned Gaussian measures. We use this assumption for numerical stability purposes, i.e., estimating the covariance matrix of \glspl{gmm} in high dimensions is much more difficult than estimating the standard deviation vector $\mathbf{s}^{(P)}$. Here, one has two choices. First, it is possible to introduce a transformation so as to force features to be uncorrelated (e.g., through principal components analysis). This approach, nonetheless, requires more data points per domain than features, which is not always feasible. Conversely, one can increase the number of components for expressing the shape of the data (see Fig.~\ref{fig:axis_aligned_gmm}). As we show in our experiments sections, we achieve good adaptation performance, even while sampling points from axis-aligned \glspl{gmm}.

\begin{figure}[ht]
    \centering
    \includegraphics[width=0.7\linewidth]{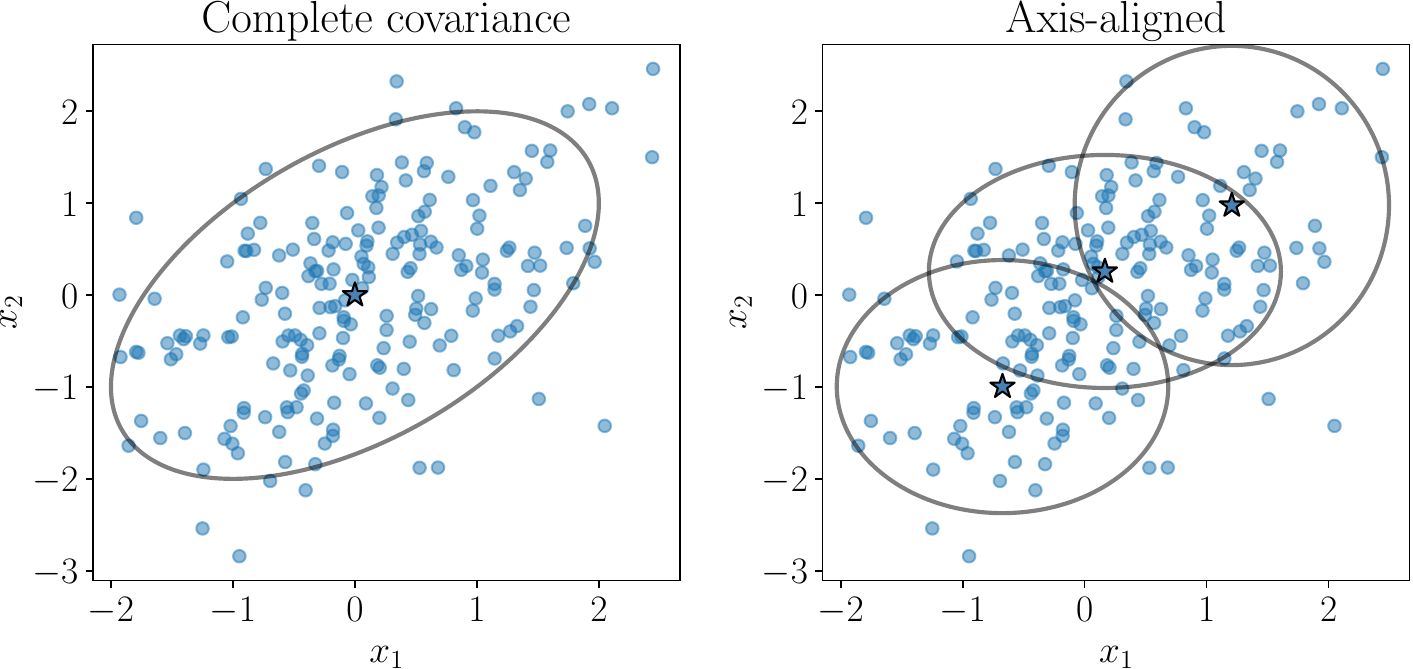}
    \caption{\textbf{Illustration of axis-aligned \glspl{gmm}.} This hypothesis leads to \glspl{gmm} that need more components to express the underlying data distribution.}
    \label{fig:axis_aligned_gmm}
\end{figure}

We use the \gls{gmm}-\gls{ot} framework of~\cite{delon2020wasserstein}, which is convenient to our setting for 2 reasons. First, they are able to represent measures with sub-populations, such as those commonly encountered in classification and domain adaptation. Second, they yield a tractable \gls{ot} problem, when $\gamma$ is further restricted to be a \gls{gmm} itself, that is,
\begin{align}
    \omega^{\star} &= \text{GMMOT}(P, Q) = \argmin{\omega \in \Gamma(\mathbf{p},\mathbf{q})}\sum_{i=1}^{K_{P}}\sum_{j=1}^{K_{Q}}\omega_{ij}\mathcal{W}_{2}(P_{i},Q_{j})^{2},\label{eq:gmmot}
\end{align}
where the \gls{ot} plan is given by $\gamma^{\star} = \sum_{i=1}^{n}\sum_{j=1}^{m}\omega_{ij}^{\star}f_{\theta}(\mathbf{x})\delta(\mathbf{y} - T_{ij}(\mathbf{x}))$ and $\mathcal{W}_{2}(P_{i}, Q_{j})^{2}$ is the Wasserstein distance between components $P_{i}$ and $Q_{j}$ (c.f., eq.~\ref{eq:gauss_w2}). Furthermore, the GMMOT problem defines the Mixture-Wasserstein distance~\cite{delon2020wasserstein},
\begin{align}
    \mathcal{MW}_{2}(P, Q)^{2} &= \sum_{i=1}^{K_{P}}\sum_{j=1}^{K_{Q}}\omega_{ij}^{\star}\mathcal{W}_{2}(P_{i},Q_{j})^{2}.\label{eq:mixture_wasserstein_distance}
\end{align}

\gls{dadil}~\cite{montesuma2023learning} is an \gls{ot}-based framework for expressing probability measures as barycenters of synthetic measures, called atoms. In this case, the authors use empirical measures, i.e., $\hat{P} = n^{-1}\sum_{i=1}^{n}\delta_{(\mathbf{x}_{i}^{(P)},\mathbf{y}_{i}^{(P)})}$. The framework is inspired by dictionary learning literature~\cite{schmitz2018wasserstein}. The authors introduce \emph{atoms} $\mathcal{P} = \{\hat{P}_{c}\}_{c=1}^{C}$ and \emph{barycentric coordinates} $\Lambda = \{\lambda_{\ell}\}_{\ell=1}^{N}$, such that each measure in \gls{msda} is expressed as a Wasserstein barycenter $\hat{Q}_{\ell} = \mathcal{B}(\lambda_{\ell},\mathcal{P})$. This framework leads to the following optimization problem,
\begin{align}
    (\Lambda^{\star}, \mathcal{P}^{\star}) &= \argmin{\Lambda,\mathcal{P}} \mathcal{W}_{2}(\hat{Q}_{T}, \mathcal{B}(\lambda_{\ell}; \mathcal{P})) + \sum_{\ell=1}^{N}\mathcal{W}_{c,2}(\hat{Q}_{\ell}, \mathcal{B}(\lambda_{\ell}; \mathcal{P}))^{2},
\end{align}
where $c\biggr{(}(\mathbf{x}^{(Q_{\ell})}, \mathbf{y}^{(Q_{\ell})}),(\mathbf{x}^{(B_{\ell})}, \mathbf{y}^{(B_{\ell})})\biggr{)} = \lVert \mathbf{x}^{(Q_{\ell})}
- \mathbf{x}^{(B_{\ell})} \rVert_{2}^{2} + \beta\lVert \mathbf{y}^{(Q_{\ell})} -
\mathbf{y}^{(B_{\ell})} \rVert_{2}^{2}$, and $\beta \geq 0$ is a constant expressing how costly it is to move samples from different classes. This framework makes it easier to express the distributional shift between the different $\mathcal{Q} = \{\hat{Q}_{1}, \cdots, \hat{Q}_{N_{S}}, \hat{Q}_{T}\}$. Especially, since $\hat{B}_{\ell} = \mathcal{B}(\lambda_{\ell},\mathcal{P})$ is labeled, one can synthesize labeled target domain data by reconstructing the target measure with $\lambda_{T} := \lambda_{N + 1}$.
\section{Methodological Contributions}\label{sec:methodological_contributions}

\subsection{First Order Analysis of $\mathcal{MW}_{2}$}\label{sec:first_order_mw2}

In this section we analyze $P \mapsto \mathcal{MW}_{2}(P,Q)^{2}$, for a fixed $Q$. We are particularly interested on how to map the components of $P$ towards $Q$, while minimizing this distance. The following theorem provides us a strategy,
\begin{theorem}\label{thm:first_order_mw2}
    Let $P$ and $Q$ be two \glspl{gmm} with components $P_{i} = \mathcal{N}(\mathbf{m}_{i}^{(P)}, (\mathbf{s}_{i}^{(P)})^{2})$ (resp. $Q_{j}$) and $\omega^{\star}$ be the solution of eq.~\ref{eq:gmmot}. The first-order optimality conditions of $\mathcal{MW}_{2}^{2}$, with respect $\mathbf{m}_{i}$ and $\mathbf{s}_{i}$ are given by,
    \begin{align}
        \hat{\mathbf{m}}_{i} = T_{\omega^{\star}}(\mathbf{m}_{i}^{(P)}) = \sum_{j=1}^{K_{Q}}\dfrac{\omega_{ij}^{\star}}{p_{i}}\mathbf{m}_{j}^{(Q)}\text{, and }
        \hat{\mathbf{s}}_{i} = T_{\omega^{\star}}(\mathbf{s}_{i}^{(P)}) = \sum_{j=1}^{K_{Q}}\dfrac{\omega_{ij}^{\star}}{p_{i}}\mathbf{s}_{j}^{(Q)},\label{eq:first_order_mw}
    \end{align}
    where $\omega^{\star}$ is the solution of eq.~\ref{eq:gmmot}.
\end{theorem}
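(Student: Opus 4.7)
The plan is to exploit the closed-form expression for the $\mathcal{W}_2^2$ distance between axis-aligned Gaussians (eq.~\ref{eq:gauss_w2}) and then apply a Danskin-type (envelope) argument so that $\omega^{\star}$ can be treated as a constant when differentiating $\mathcal{MW}_2^2$ with respect to the component parameters of $P$. Concretely, I would first substitute eq.~\ref{eq:gauss_w2} into eq.~\ref{eq:mixture_wasserstein_distance} to obtain
\begin{equation*}
\mathcal{MW}_2(P,Q)^2 = \sum_{i=1}^{K_P}\sum_{j=1}^{K_Q} \omega_{ij}^{\star}\Bigl( \lVert \mathbf{m}_i^{(P)} - \mathbf{m}_j^{(Q)} \rVert_2^2 + \lVert \mathbf{s}_i^{(P)} - \mathbf{s}_j^{(Q)} \rVert_2^2 \Bigr),
\end{equation*}
which is quadratic and strictly convex in each $\mathbf{m}_i^{(P)}$ and each $\mathbf{s}_i^{(P)}$ once $\omega^{\star}$ is held fixed.

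Second, I would differentiate the above with respect to $\mathbf{m}_i^{(P)}$ while freezing $\omega^{\star}$. Only the terms involving the $i$-th component of $P$ contribute, and the first-order condition reads
\begin{equation*}
2\sum_{j=1}^{K_Q} \omega_{ij}^{\star}\bigl(\mathbf{m}_i^{(P)} - \mathbf{m}_j^{(Q)}\bigr) = \mathbf{0}.
\end{equation*}
Since $\omega^{\star} \in \Gamma(\mathbf{p},\mathbf{q})$, the row-sum constraint gives $\sum_{j=1}^{K_Q}\omega_{ij}^{\star} = p_i$, and solving yields $\hat{\mathbf{m}}_i = p_i^{-1}\sum_j \omega_{ij}^{\star}\mathbf{m}_j^{(Q)}$. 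An identical computation, replacing means by standard deviation vectors, delivers the second identity in eq.~\ref{eq:first_order_mw}.

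The main subtlety — and the only real obstacle — is justifying that $\omega^{\star}$ may be treated as independent of $(\mathbf{m}_i^{(P)},\mathbf{s}_i^{(P)})$ in these stationarity conditions. To this end I would invoke Danskin's theorem applied to the linear program eq.~\ref{eq:gmmot}: regarded as a function of the cost matrix with entries $C_{ij} = \mathcal{W}_2(P_i,Q_j)^2$, the optimal value $\mathcal{MW}_2^2$ is the minimum over a fixed polytope $\Gamma(\mathbf{p},\mathbf{q})$ of linear functions of $C$, hence concave in $C$, with (sub)gradient equal to the optimal primal variable $\omega^{\star}$. Composing this with the smooth dependence of each $C_{ij}$ on $(\mathbf{m}_i^{(P)}, \mathbf{s}_i^{(P)})$ via eq.~\ref{eq:gauss_w2} and applying the chain rule reproduces exactly the gradient computed under the frozen-$\omega^{\star}$ assumption, so the stationarity conditions derived above are genuinely those of the map $P \mapsto \mathcal{MW}_2(P,Q)^2$. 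Everything else is a one-line calculation exploiting the marginal constraints of $\omega^{\star}$.
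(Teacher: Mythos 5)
Your proposal is correct and follows essentially the same route as the paper: the paper likewise freezes $\omega^{\star}$, differentiates $\mathcal{MW}_{2}^{2}$ in $\mathbf{m}_{i}$ using the marginal constraint $\sum_{j}\omega_{ij}^{\star}=p_{i}$, and sets the gradient to zero (with the $\mathbf{s}_{i}$ case handled identically). The only difference is that you make explicit, via the Danskin/envelope argument for the linear program in eq.~\ref{eq:gmmot}, why treating $\omega^{\star}$ as constant is legitimate --- a justification the paper leaves implicit.
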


\begin{proof}
    Our proof relies on the analysis of $\{(\mathbf{m}_{i},\mathbf{s}_{i})\}_{i=1}^{K_{P}} \mapsto \mathcal{MW}_{2}(P,Q)^{2}$ (c.f., equation~\ref{eq:mixture_wasserstein_distance}). Given $\omega^{\star} = \text{GMMOT}(P, Q)$,
    \begin{align*}
        \dfrac{\partial \mathcal{MW}_{2}^{2}}{\partial \mathbf{m}_{i}} = 2\sum_{j=1}^{K_{Q}}\omega_{ij}^{\star}(\mathbf{m}_{i}-\mathbf{m}_{j}^{(Q)}) = 2\biggr{(}p_{i}\mathbf{m}_{i} - \sum_{j=1}^{K_{Q}}\omega_{ij}^{\star}\mathbf{m}_{j}^{(Q)}\biggr{)},
    \end{align*}
    and, by equating this last term to $0$, one gets the desired equality.
\end{proof}

Equation~\ref{eq:first_order_mw} is similar to the barycentric mapping in \gls{eot}~\cite[eq. 13]{courty2016optimal}, which serves as an approximation for the Monge mapping between $P$ and $Q$. In our case, the barycentric mappings act on the parameters of the \gls{gmm}, rather than on its samples. Theorem~\ref{thm:first_order_mw2} will be useful in the calculation of $\mathcal{MW}_{2}$ barycenters.

\subsection{Supervised Mixture-Wasserstein Distances}\label{sec:supervised_gmm}

In this paper, we consider supervised learning problems. As such, it is necessary to equip the components of \glspl{gmm} with labels that represent the classes in the datasets. We propose doing so through a simple heuristic, especially, we model $P(\mathbf{x}|y)$ through a \gls{gmm}. We then concatenate the $n_{c}$ obtained \glspl{gmm}, and assign, for the $k-$th \gls{gmm} of the $y-$th class, $v_{k,y'}^{(P)} = \delta(y'-y)$, i.e., a vector of $n_{c}$ components, and $1$ on the $y-$th entry. We can assure that the resulting weights sum to $1$ by dividing their value by $\sum_{y=1}^{n_{c}}\sum_{k=1}^{K}p_{k,y}$, where $p_{k,y}$ corresponds to the weight of the $k-$th component of the $y-$th \gls{gmm}.

Given a \gls{gmm} $\{p_{k},\mathbf{m}_{k}^{(P)},\mathbf{s}_{k}^{(P)}, \mathbf{v}_{k}^{(P)}\}_{k=1}^{K}$, we define a classifier through \gls{map} estimation. This strategy is carried out through,
\begin{align}
    \hat{h}_{MAP}(\mathbf{x}) = \argmax{y=1,\cdots,n_{c}}P(y|\mathbf{x}) = \sum_{k=1}^{K}\underbrace{P_{\theta}(k|\mathbf{x})}_{p_{k}P_{k}(\mathbf{x})/\sum_{k'}p_{k'}P_{k'}(\mathbf{x})}\underbrace{P(y|k)}_{v_{k,y}^{(P)}},\label{eq:clf_map}
\end{align}
we use this classifier in a few illustrative examples in section~\ref{sec:exp_ablations}.

\noindent\emph{Remark.} In equation~\ref{eq:clf_map}, we are implicitly assuming that the component $k$ is conditionally independent with $y$ given $\mathbf{x}$. This remark is intuitive, as $\mathbf{x}$ explains, at the same time, the component and the label.

Similarly to \gls{eot}, when the mixtures $P$ and $Q$ are labeled, one needs to take into account the labels in the ground-cost. Given $\beta > 0$, we propose the following distance between labeled \glspl{gmm},
\begin{align}
    \mathcal{SMW}_{2}(P, Q)^{2} &= \min{\omega \in \Gamma(\mathbf{p},\mathbf{q})}\sum_{i=1}^{K_{P}}\sum_{j=1}^{K_{Q}}\omega_{ij}(\mathcal{W}_{2}(P_{i},Q_{j})^{2} + \beta\lVert \mathbf{v}_{i}^{(P)} - \mathbf{v}_{j}^{(Q)} \rVert_{2}^{2}).\label{eq:sup_mixture_wasserstein_distance}
\end{align}
While simple, using an Euclidean distance for the soft-labels allows us to derive similar first-order conditions for $\mathcal{SMW}_{2}$,
\begin{theorem}\label{thm:first_order_smw2}
    Under the same conditions of theorem~\ref{thm:first_order_mw2}, let $P_{i}$ and $Q_{j}$ be equipped with labels $\mathbf{v}_{i}^{(P)}$ and $\mathbf{v}_{j}^{(Q)}$. The first order optimality conditions of $\mathcal{SMW}_{2}$ with respect $\mathbf{m}_{i}$ and $\mathbf{s}_{i}$ are given by eq.~\ref{eq:first_order_mw}. Furthermore, for $\mathbf{v}_{i}$,
    \begin{align}
        \hat{\mathbf{v}}_{i} = T_{\omega}(\mathbf{v}_{i}^{(P)}) = \sum_{j=1}^{K_{Q}}\dfrac{\omega_{ij}^{\star}}{p_{i}}\mathbf{v}_{j}^{(Q)}.\label{eq:first_order_smw}
    \end{align}
\end{theorem}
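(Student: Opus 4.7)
The plan is to mirror the proof of Theorem~\ref{thm:first_order_mw2} almost verbatim, exploiting the additive structure of the supervised cost. Writing out
\[
\mathcal{SMW}_{2}(P,Q)^{2} = \sum_{i,j}\omega_{ij}^{\star}\mathcal{W}_{2}(P_{i},Q_{j})^{2} + \beta\sum_{i,j}\omega_{ij}^{\star}\lVert \mathbf{v}_{i}^{(P)} - \mathbf{v}_{j}^{(Q)}\rVert_{2}^{2},
\]
the second sum does not involve $\mathbf{m}_{i}$ or $\mathbf{s}_{i}$, so the partial derivatives with respect to these parameters coincide with the ones already computed in Theorem~\ref{thm:first_order_mw2}. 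This immediately yields the first part of the statement, namely that the optimality conditions for the means and standard deviations are unchanged.

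For the label vectors $\mathbf{v}_{i}$, I would repeat the computation of Theorem~\ref{thm:first_order_mw2} on the new quadratic term. Differentiating $\beta\sum_{j}\omega_{ij}^{\star}\lVert \mathbf{v}_{i} - \mathbf{v}_{j}^{(Q)}\rVert_{2}^{2}$ with respect to $\mathbf{v}_{i}$ produces $2\beta\sum_{j}\omega_{ij}^{\star}(\mathbf{v}_{i} - \mathbf{v}_{j}^{(Q)})$, which by the marginal constraint $\sum_{j}\omega_{ij}^{\star} = p_{i}$ collapses to $2\beta\bigl(p_{i}\mathbf{v}_{i} - \sum_{j}\omega_{ij}^{\star}\mathbf{v}_{j}^{(Q)}\bigr)$. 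Setting the gradient to zero and dividing by $p_{i}$ delivers \eqref{eq:first_order_smw}.

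The only real subtlety, and the point I would spell out carefully, is the same one that is implicit in Theorem~\ref{thm:first_order_mw2}: the plan $\omega^{\star}$ is itself a function of the parameters being optimized, so one needs to justify differentiating as if $\omega^{\star}$ were frozen. This is an application of Danskin's/envelope theorem: since $\mathcal{SMW}_{2}^{2}$ is the minimum of a family of linear functions of $\omega$ over the compact polytope $\Gamma(\mathbf{p},\mathbf{q})$, its subgradient with respect to the cost parameters is obtained by fixing any optimal $\omega^{\star}$ and differentiating the integrand. Once this point is stated, the rest of the argument is a one-line gradient computation, completely parallel to the proof of the previous theorem.
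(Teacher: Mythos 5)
Your proof is correct and follows essentially the same route as the paper: the label term is independent of $\mathbf{m}_{i}$ and $\mathbf{s}_{i}$, so those conditions carry over from Theorem~\ref{thm:first_order_mw2}, and differentiating the quadratic label term with the marginal constraint $\sum_{j}\omega_{ij}^{\star}=p_{i}$ gives eq.~\ref{eq:first_order_smw}. The Danskin-type remark about treating $\omega^{\star}$ as fixed is a sensible extra justification that the paper leaves implicit, but it does not change the argument.
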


\begin{proof}
    The label distance term in $\mathcal{SMW}$ is independent of $\mathbf{m}_{i}$ and $\mathbf{s}_{i}$, hence the optimality conditions of these variables remain unchanged. Therefore, the first-order optimality condition with respect $\mathbf{v}_{i}$ is,
    \begin{align*}
        \dfrac{\partial \mathcal{SMW}_{2}^{2}}{\partial \mathbf{v}_{i}} = 2\beta\sum_{j=1}^{K_{Q}}\omega_{ij}^{\star}(\mathbf{v}_{i}-\mathbf{v}_{j}^{(Q)}) = 2\beta\biggr{(}p_{i}\mathbf{v}_{i} - \sum_{j=1}^{K_{Q}}\omega_{ij}^{\star}\mathbf{v}_{j}^{(Q)}\biggr{)},
    \end{align*}
    which, for $\beta > 0$, is zero if and only if $\mathbf{v}_{i} = T_{\omega}(\mathbf{v}_{i}^{(P)})$.
\end{proof}

\noindent\emph{Remark.} In equation~\ref{eq:sup_mixture_wasserstein_distance}, we are heuristically adding a label regularization term to the $\mathcal{MW}_{2}$ distance. The actual continuous counterpart (between samples, rather than components) is currently beyond the scope of this paper, but methodologically, this choice remains valid, and is closer to the contributions of~\cite{chen2018optimal}.

\subsection{Mixture Wasserstein Barycenters}\label{sec:mixture_wbary}

In this section, we detail a new algorithm for computing barycenters of \glspl{gmm} under the $\mathcal{MW}_{2}$ and $\mathcal{SMW}_{2}$ metrics. As such, we adapt the definition of~\cite{agueh2011barycenters},
\begin{definition}\label{def:mw2_barycenters}
    Given $C \geq 1$ \glspl{gmm} $\mathcal{P} = \{P_{c}\}_{c=1}^{C}$, $K_{B} \geq 1$, and a vector of barycentric coordinates $\lambda \in \Delta_{C}$, the $\mathcal{SMW}_{2}$ barycenter is given by,
    \begin{align}
        B^{\star} = \mathcal{B}(\lambda,\mathcal{P}) = \argmin{B \in \text{GMM}_{d}(K_{B})}\biggr{\{}\mathcal{L}(B) = \sum_{c=1}^{C}\lambda_{c}\mathcal{SMW}_{2}(B,P_{c})^{2}\biggr{\}}.\label{eq:mw2_barycenter}
    \end{align}
\end{definition}
When the \glspl{gmm} in $\mathcal{P}$ are unlabeled, one may define, by analogy, a barycenter under the $\mathcal{MW}_{2}$. Henceforth we describe an algorithm for labeled \glspl{gmm}, but its extension for unlabeled \glspl{gmm} is straightforward. Inspired by previous results in empirical Wasserstein barycenters~\cite{cuturi2014fast,montesuma2023learning}, we propose a novel strategy for computing $\mathcal{B}(\lambda,\mathcal{P})$. Our method relies on the analysis of $\theta_{B} = \{(\mathbf{m}_{i}^{(B)}, \mathbf{s}_{i}^{(B)},\mathbf{v}_{i}^{(B)})\}_{i=1}^{K_{B}} \mapsto \sum_{c=1}^{C}\lambda_{c}\mathcal{SMW}_{2}(B,P_{c})^{2}$. First, for a fixed $\theta_{B}$, we find $\omega_{1}^{\star},\cdots,\omega_{C}^{\star}$ transport plans. Then, for fixed transport plans, we solve,
\begin{align*}
    \argmin{\theta_{B}}\mathcal{L}(\theta_{B}) &= \sum_{c=1}^{C}\lambda_{c}\sum_{i=1}^{K_{B}}\sum_{j=1}^{K_{P}}\omega_{c,i,j}^{\star}C_{c,i,j},\\
    \text{where }C_{c,i,j} &= \lVert \mathbf{m}_{i}^{(B)} - \mathbf{m}_{j}^{(P_{c})}\rVert_{2}^{2} + \lVert \mathbf{s}_{i}^{(B)} - \mathbf{s}_{j}^{(P_{c})}\rVert_{2}^{2} + \beta\lVert \mathbf{v}_{i}^{(B)} - \mathbf{v}_{j}^{(P_{c})}\rVert_{2}^{2}
\end{align*}
which can be optimized by taking derivatives with respect $\mathbf{m}^{(B)}_{i}$, $\mathbf{s}^{(B)}_{i}$ and $\mathbf{v}^{(B)}_{i}$. For instance, taking the derivative of $\mathcal{L}(\theta_{B})$ with respect $\mathbf{m}_{i}^{(B)}$,
\begin{align*}
    \dfrac{\partial \mathcal{L}}{\partial\mathbf{m}_{i}^{(B)}} = 2\sum_{c=1}^{C}\lambda_{k}\sum_{j=1}^{K_{P}}\omega_{c,i,j}^{\star}(\mathbf{m}_{i}^{(B)} - \mathbf{m}_{j}^{(P_{c})}) = \dfrac{2}{K_{B}}\mathbf{m}_{i}^{(B)} - 2\sum_{c=1}^{C}\lambda_{c}\sum_{j=1}^{K_{P}}\omega_{c,i,j}^{\star}\mathbf{m}_{j}^{(P_{c})}
\end{align*}
setting the derivative to $0$, one has, $\mathbf{m}_{i}^{(B)} = \sum_{c=1}^{C}\lambda_{c}T_{\omega_{c}^{\star}}(\mathbf{m}_{i}^{(B)}).$ Similar results can be acquired for $\mathbf{s}_{i}^{(B)}$ and $\mathbf{v}_{i}^{(B)}$ by taking the appropriate derivatives. Our strategy is shown in Algorithm~~\ref{alg:gmmot_bary}.

\begin{algorithm}[ht]
  \caption{$\mathcal{SMW}_{2}$ Barycenter of GMMs}
  \label{alg:gmmot_bary}
  \Function{smw\_barycenter($\{(\mathbf{M}^{(P_{c})},\mathbf{S}^{(P_{c})},\mathbf{V}^{(P_{c})})\}_{c=1}^{C}$, $\tau$, $N_{it}$)}{
    $\mathbf{m}_{i}^{(B)} \sim \mathcal{N}(\mathbf{0}, \mathbf{I}_{d})$, $\mathbf{s}_{i}^{(B)} = 1$ and $\mathbf{y}_{i}^{(B)} = \nicefrac{\mathds{1}_{n_{c}}}{n_{c}}$\;
    \While{$|L_{it} - L_{it-1}| \geq \tau$ and $it \leq N_{it}$}{
        \Comment{Compute GMM-OT plans}
        \For{$c=1,\cdots,C$}{
            $\omega^{(c,it)} = \text{GMMOT}(B,P_{c})$\;
        }
        \Comment{Note: $\mathcal{W}_{2}(B_{i}, P_{c,j})^{2} = \lVert \mathbf{m}_{i}^{(B)} - \mathbf{m}_{j}^{(P_{c})} \rVert_{2}^{2} + \lVert \mathbf{s}_{i}^{(B)} - \mathbf{s}_{j}^{(P_{c})}\rVert_{2}^{2}$}
        $L_{it} = \sum_{c=1}^{C}\lambda_{c}\sum_{i=1}^{K_{B}}\sum_{j=1}^{K_{P}}\omega^{(c,it)}_{ij}\biggr{(}(\mathcal{W}_{2}(B_{i},P_{c,j})^{2} + \beta\lVert \mathbf{v}^{(B)}_{i} - \mathbf{v}^{(P_{c})}_{j} \rVert_{2}^{2})\biggr{)}$\;
        \Comment{Update barycenter parameters}
        $\mathbf{m}^{(B)}_{i} = \sum_{c=1}^{C}\lambda_{c}T_{\omega^{(c,it)}}(\mathbf{m}^{(B)}_{i})$\;
        $\mathbf{s}^{(B)}_{i} = \sum_{c=1}^{C}\lambda_{c}T_{\omega^{(c,it)}}(\mathbf{s}^{(B)}_{i})$\;
        $\mathbf{v}^{(B)}_{i} = \sum_{c=1}^{C}\lambda_{c}T_{\omega^{(c,it)}}(\mathbf{v}^{(B)}_{i})$\;
    }
    \Return{$\mathbf{M}^{(B)}$, $\mathbf{S}^{(B)}$, $\mathbf{V}^{(B)}$}\;
  }
\end{algorithm}

\subsection{Multi-Source Domain Adaptation through GMM-OT}\label{sec:msda_gmm}

In this section, we detail two contributions for \gls{msda} based on \gls{gmm}-\gls{ot}: \gls{gmm}-\gls{wbt} and \gls{gmm}-\gls{dadil}. In both cases, we suppose access to $N$ labeled source \glspl{gmm} $\mathcal{Q}_{S} = \{Q_{S_{\ell}}\}_{\ell=1}^{N}$ and an unlabeled target \gls{gmm} $Q_{T}$. Contrary to the empirical versions of these algorithms~\cite{montesuma2021icassp,montesuma2021cvpr,montesuma2023learning}, we assume that an axis-aligned \gls{gmm} has been learned for each domain, including the target.

\noindent\textbf{GMM-WBT.} The intuition of this algorithm is transforming the \gls{msda} scenario into a single-source one, by first calculating a Wasserstein barycenter of $B = \mathcal{B}(\mathds{1}_{N} / N; \mathcal{Q}_{S})$. After this step, \gls{wbt} solves a single-source problem between $B$ and $Q_{T}$. When each $Q_{S_{\ell}}$ is a \gls{gmm}, the parameters of $B$ are estimated through algorithm~\ref{alg:gmmot_bary}. Next, one solves for $\omega^{(T)} = \text{GMMOT}(B, Q_{T})$, so that the parameters of $B$ are transported towards $Q_{T}$ using theorems~\ref{thm:first_order_mw2} and~\ref{thm:first_order_smw2},
\begin{align}
    \hat{\mathbf{m}}_{i}^{(Q_{T})} = K_{B}\sum_{j=1}^{K_{T}}\omega_{ij}^{(T)}\mathbf{m}_{j}^{(Q_{T})}\text{, and, }\hat{\mathbf{s}}_{i}^{(Q_{T})} = K_{B}\sum_{j=1}^{K_{T}}\omega_{ij}^{(T)}\mathbf{s}_{j}^{(Q_{T})}.\label{eq:matched_gmm}
\end{align}
With a labeled \gls{gmm}, $\{\hat{\mathbf{m}}_{i}^{(Q_{T})},
\hat{\mathbf{s}}_{i}^{(Q_{T})},\mathbf{v}_{i}^{(B)}\}_{i=1}^{K_{B}}$, on the target domain, we can learn a classifier on the target domain as explained in section~\ref{sec:supervised_gmm}.

\begin{algorithm}[ht]
  \caption{GMM-Dataset Dictionary Learning}
  \label{alg:gmm_dadil}
  \Function{gmm\_dadil($\{(\mathbf{M}^{(Q_{S_{\ell}})},\mathbf{S}^{(Q_{S_{\ell}})},\mathbf{V}^{(Q_{S_{\ell}})})\}_{\ell=1}^{N}$, $\{(\mathbf{M}^{(Q_{T})},\mathbf{S}^{(Q_{T})})\}$, $N_{it}$, $\eta$)}{
    \Comment{Initialization.}
    $\mathbf{m}_{i}^{(P_{k})} \sim \mathcal{N}(\mathbf{0}, \mathbf{I}_{d})$, $\mathbf{s}_{i}^{(P_{k})} := 1$, $\mathbf{u}_{i}^{(P_{k})} := \nicefrac{1}{n_{c}}$, and $\lambda_{\ell} = \nicefrac{1}{K}$\;
    \For{$it=1,\cdots,N_{it}$}{
        $L \leftarrow 0$\;
        \Comment{Change of variables}
        $\mathbf{v}_{i}^{(P_{k})} \leftarrow \text{softmax}(\mathbf{u}_{i}^{(P_{k})})$\;
        \Comment{Evaluate supervised loss on sources}
        \For{$\ell=1,\cdots,N$}{
            $L \leftarrow L + \mathcal{SMW}_{2}(Q_{S_{\ell}},\mathcal{B}(\lambda_{\ell}, \mathcal{P}))^{2}$\;
        }
        \Comment{Evaluate unsupervised loss on targets}
        $L \leftarrow L + \mathcal{MW}_{2}(Q_{T},\mathcal{B}(\lambda_{T}, \mathcal{P}))^{2}$\;
        \Comment{Gradient step}
        $\mathbf{m}_{j}^{(P_{k})} \leftarrow \mathbf{m}_{j}^{(P_{k})} - \eta \nicefrac{\partial L}{\partial \mathbf{m}^{(P_{k})}_{j}}$\;
        $\mathbf{u}_{j}^{(P_{k})} \leftarrow \mathbf{u}_{j}^{(P_{k})} - \eta \nicefrac{\partial L}{\partial \mathbf{u}^{(P_{k})}_{j}}$\;
        \Comment{Note: we project variables $\mathbf{s}$ and $\lambda$.}
        $\mathbf{s}_{j}^{(P_{k})} \leftarrow \text{proj}_{\mathbb{R}^{d}_{+}}(\mathbf{s}_{j}^{(P_{k})} - \eta \nicefrac{\partial L}{\partial \mathbf{s}^{(P_{k})}_{j}})$\;
        $\lambda_{\ell} \leftarrow \text{proj}_{\Delta_{C}}(\lambda_{\ell} - \eta \nicefrac{\partial L}{\partial \lambda_{\ell}})$\;
    }
    \Return{$\Lambda,\mathcal{P}$}\;
  }
\end{algorithm}
\noindent\textbf{GMM-DaDiL.} Our second algorithm consists of a parametric version for the \gls{dadil} algorithm of~\cite{montesuma2023learning}. The idea is to replace the atoms in $\mathcal{P} = \{\hat{P}_{c}\}_{c=1}^{C}$ by \glspl{gmm} parametrized through $\Theta_{P} = \{(\mathbf{M}^{(P_{c})}, \mathbf{S}^{(P_{c})}, \mathbf{V}^{(P_{c})})\}_{c=1}^{C}$. Learning a dictionary is thus equivalent to estimating these parameters, that is,
\begin{align}
    (\Lambda^{\star}, \Theta_{P}^{\star}) &= \argmin{\Lambda,\Theta_{P}} \mathcal{MW}_{2}(Q_{T}, \mathcal{B}(\lambda_{T},\mathcal{P}))^{2} + \sum_{\ell=1}^{N}\mathcal{SMW}_{2}(Q_{\ell}, \mathcal{B}(\lambda_{\ell}; \mathcal{P}))^{2}.\label{eq:gmm_dadil}
\end{align}

While eq.~\ref{eq:gmm_dadil} does not have a closed-form solution, we optimize it through gradient descent. An advantage of the \gls{gmm} modeling is that this optimization problem involves far less variables than \gls{dadil}, hence we do not resort to mini-batches. We detail our strategy in Algorithm~\ref{alg:gmm_dadil}. Note that we need to enforce 3 kinds of constraints: (i) $\mathbf{s}_{i}^{(P_{c})} \in \mathbb{R}^{d}_{+}$, (ii) $\lambda_{\ell} \in \Delta_{C}$ and (iii) $\mathbf{y}_{i}^{(P_{c})} \in \Delta_{n_{cl}}$. For (i) and (ii), we use orthogonal projections into $\mathbb{R}^{d}_{+}$ and $\Delta_{C}$ respectively. We additionally set $\mathbf{s}_{i}^{(P_{c})} \geq s_{min}$ for numerical stability. For (iii), we perform a change of variables $\mathbf{y}_{i}^{(P_{c})} = \text{softmax}(\mathbf{u}_{i}^{(P_{c})})$.

Once the dictionary $(\Lambda, \mathcal{P})$ is learned, we are able to reconstruct the domains in \gls{msda} via the barycenter $\mathcal{B}(\lambda; \mathcal{P})$. We are especially interested in the target reconstruction $\lambda_{T}$, i.e., $\mathcal{B}(\lambda_{T},\mathcal{P})$. This barycenter is a labeled \gls{gmm} (as we show in figure~\ref{fig:ToyExampleDaDiL}b). As a result, we can obtain labeled samples from this \gls{gmm}, then use them to train a classifier that works on the target domain.

The computational complexity of an optimization step of algorithm~\ref{alg:gmm_dadil} corresponds to $\mathcal{O}(N \times N_{it} \times C \times K^{3}\log K)$, i.e., we calculate $N$ barycenters of $C$ atoms. One should compare this complexity with that of \gls{dadil}, i.e., $\mathcal{O}(N \times N_{it} \times M \times C \times n_{b}^{3}\log n_{b})$, where $M = \lceil \nicefrac{n}{n_{b}} \rceil$ is the number of mini-batches sampled at each iteration. In our experiments in section~\ref{sec:experiments}, we show that we achieve state-of-the-art performance with $K$ on the same order of magnitude as $n_{b}$ (e.g., a few hundred Gaussian components). As a result, we achieve a speed-up on the order of $M$ while solving an exact \gls{ot} problem (see Figure~\ref{fig:additionalexperiments} (c) below).
\section{Experiments}\label{sec:experiments}

\subsection{Toy Example}

In this section, we explore \gls{gmm}-\gls{wbt} and \gls{gmm}-\gls{dadil} in the context of a toy example. We generate 4 datasets over $\mathbb{R}^{d}$, by gradually shifting and deforming initial measure through an affine mapping. In figure~\ref{fig:ToyExampleSamples} (a) we show the generated datasets. Starting with \gls{gmm}-\gls{wbt}, figure~\ref{fig:ToyExampleSamples}  (b) shows the learned \glspl{gmm} for each dataset, in which the target \gls{gmm} is not labeled. The barycenter of $\mathcal{Q}_{S} = \{Q_{S_{\ell}}\}_{\ell=1}^{N}$ is shown in figure~\ref{fig:ToyExampleWBT} (bottom-left). This barycenter is labeled. As a result, we may transfer its parameters to the target domain through \gls{gmm}-\gls{ot} (upper part), which leads to a labeled \gls{gmm} in the target domain.

\begin{figure}[ht]
    \centering
    \begin{minipage}{0.55\linewidth}
        \begin{subfigure}{\linewidth}
            \centering
            \includegraphics[width=\linewidth]{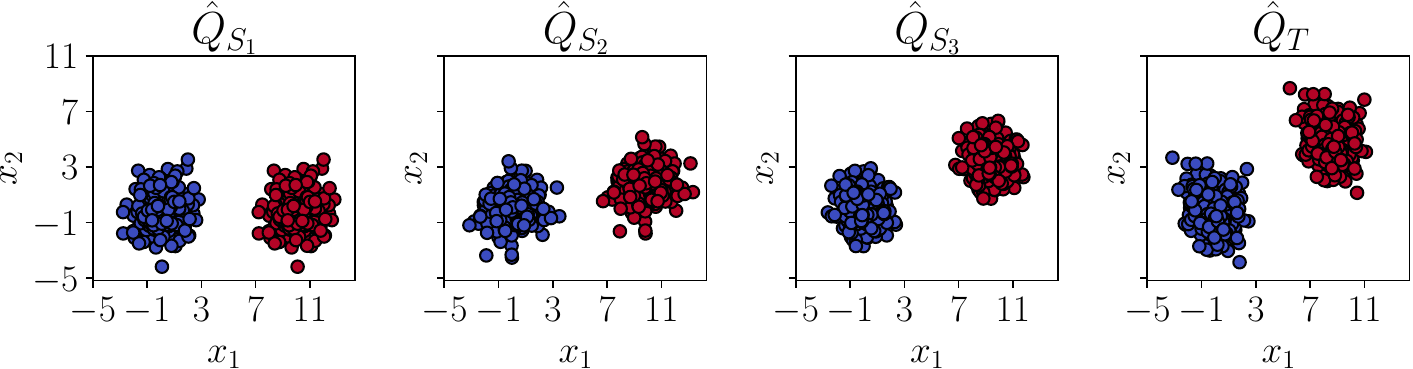}
            \caption{Data.}
        \end{subfigure}
        \begin{subfigure}{\linewidth}
            \centering
            \includegraphics[width=\linewidth]{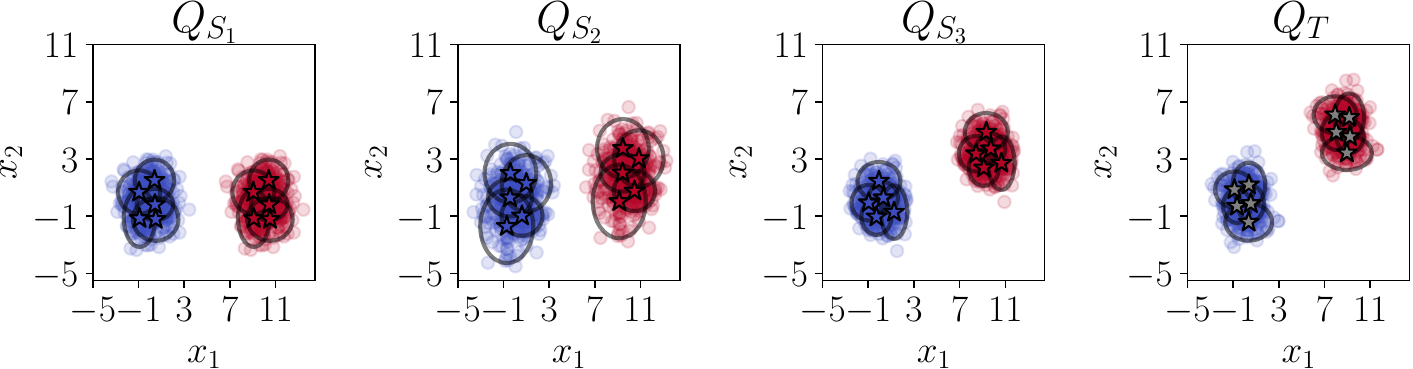}
            \caption{\glspl{gmm}.}
        \end{subfigure}
    \end{minipage}\hfill
    \begin{minipage}{0.4\linewidth}
        \begin{subfigure}{\linewidth}
            \centering
            \includegraphics[width=0.8\linewidth]{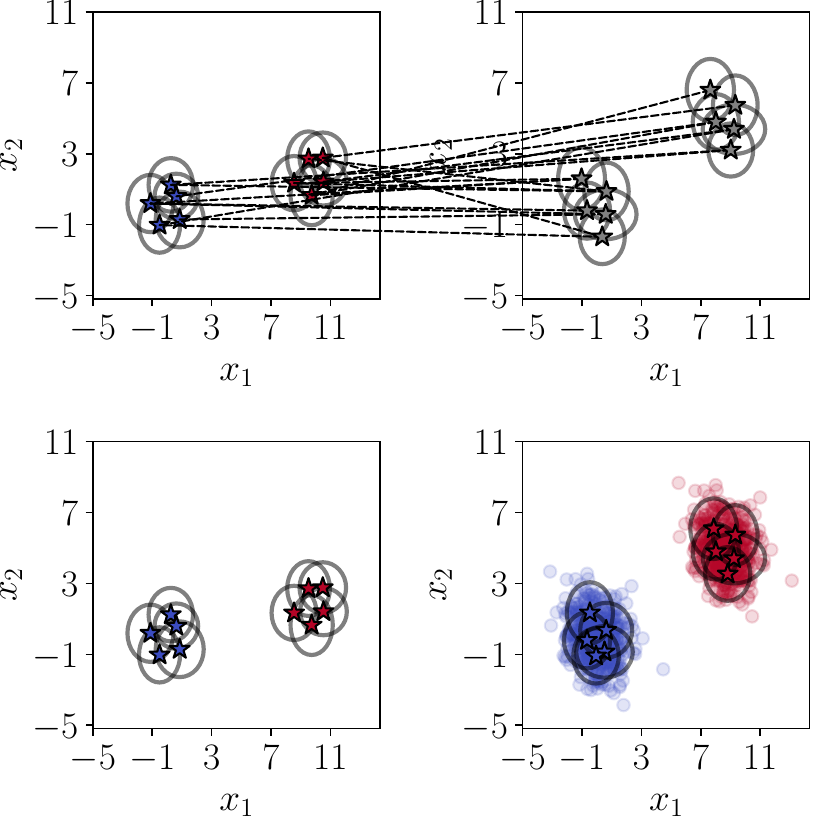}
            \caption{\gls{gmm}-\gls{wbt}}
            \label{fig:ToyExampleWBT}
        \end{subfigure}
    \end{minipage}
    \caption{\textbf{Data and \glspl{gmm} used in the toy experiment.} In (a) Each of these datasets was generated by applying an affine transformation to an initial dataset. In (b), we show an axis-aligned \gls{gmm} fitted to the data via \gls{em}. In (c), we show a summary of \gls{gmm}-\gls{wbt}, where show the \gls{ot} plan between components (upper part) between $B$ (left) and $Q_{T}$ (right). The resulting labeled \gls{gmm} is shown in the lower right part of (c).}
    \label{fig:ToyExampleSamples}
\end{figure}

Next, we show in figure~\ref{fig:ToyExampleDaDiL} a summary for the \gls{gmm}-\gls{dadil} optimization process (figure~\ref{fig:opt_summary}), and the reconstruction of target domain \glspl{gmm} (figure~\ref{fig:reconstructions}). Note that, as the training progresses, the reconstruction error and the negative log-likelihood of the \glspl{gmm} decrease. As a result, \gls{gmm}-\gls{dadil} produces accurate, labeled \glspl{gmm} for each domain. We provide further examples on the \glspl{gmm}-\gls{dadil} optimization in the supplementary materials. Next, we present our results on \gls{msda} benchmarks.

\begin{figure}[ht]
    \centering
    \begin{subfigure}{0.43\linewidth}
        \includegraphics[width=\linewidth]{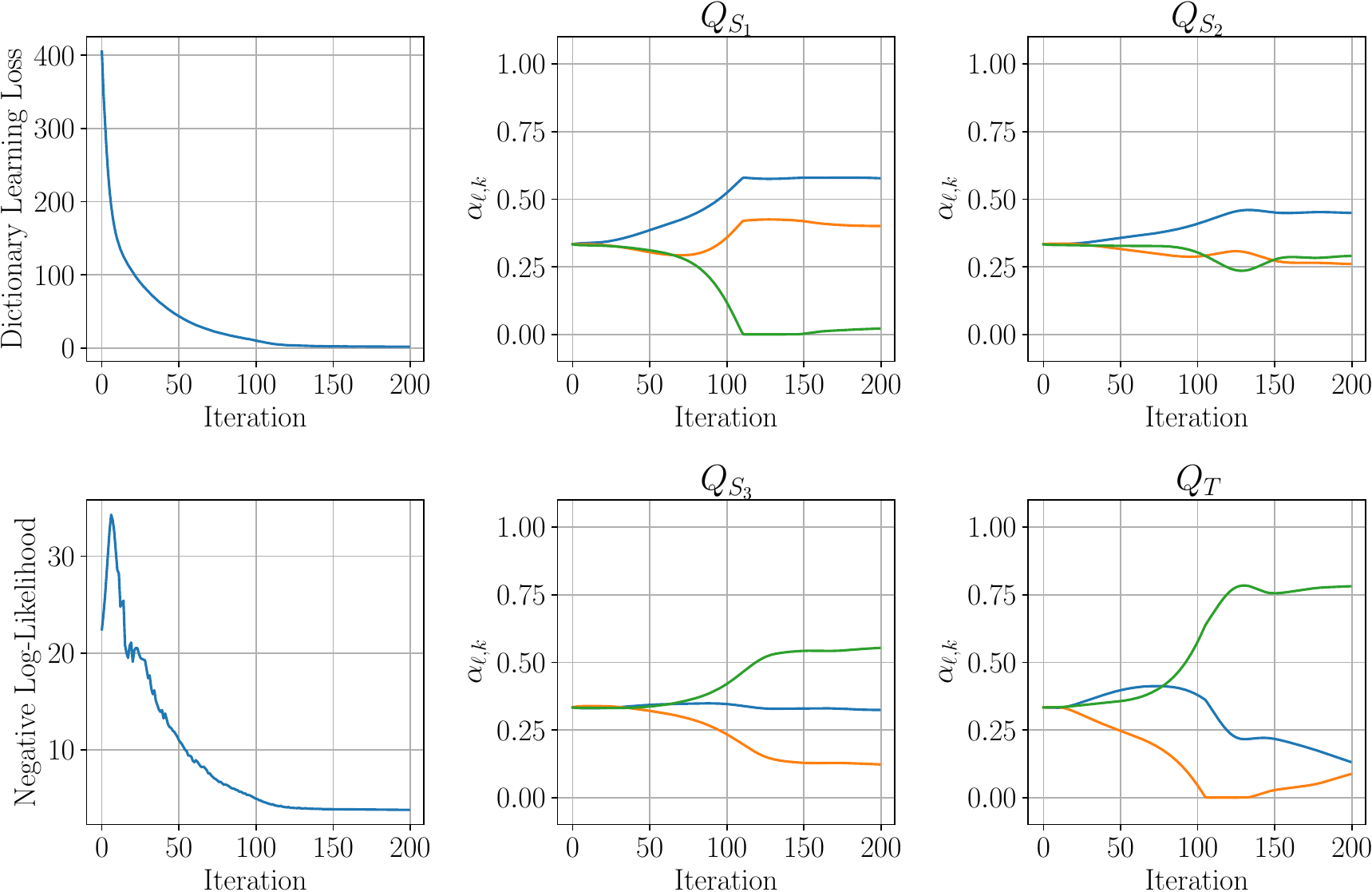}
        \caption{Optimization summary.}
       \label{fig:opt_summary}
   \end{subfigure}\hfill
   \begin{subfigure}{0.54\linewidth}
       \includegraphics[width=\linewidth]{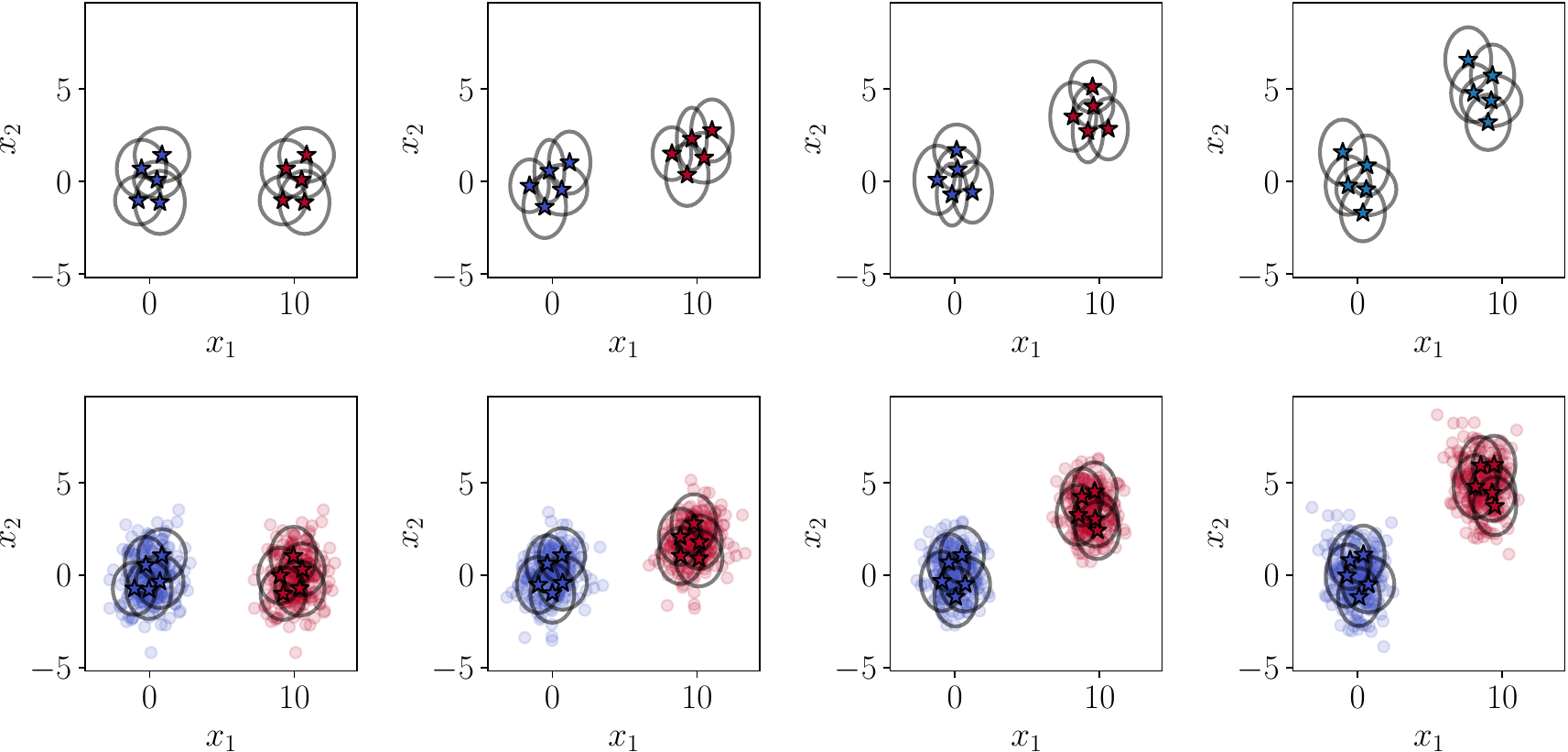}
       \caption{Reconstructions ($it=200$).}
       \label{fig:reconstructions}
   \end{subfigure}
   \caption{\textbf{Optimization and reconstruction summaries using \gls{gmm}-\gls{dadil}.} In (a), we show the evolution of loss, negative log-likelihood and barycentric coordinates (i.e., $\lambda_{\ell}$) over the course of optimization. In (b), we show the reconstructed \glspl{gmm} (i.e., $\mathcal{B}(\lambda_{\ell},\mathcal{P})$) when the algorithm converges.}
   \label{fig:ToyExampleDaDiL}
\end{figure}

\subsection{Multi-Source Domain Adaptation}

We compare our method to prior art. We focus on \gls{ot}-based methods, such as WJDOT~\cite{turrisi2022multi}, WBT~\cite{montesuma2021icassp,montesuma2021cvpr} and DaDiL~\cite{montesuma2023learning}. For completeness, we include recent strategies on deep \gls{msda} that update the encoder network during the adaptation process, rather than using pre-extracted features. These are M$^{3}$SDA~\cite{peng2019moment}, LtC-MSDA~\cite{wang2020learning}, KD3A~\cite{feng2021kd3a} and Co-MDA~\cite{liu2023co}. We establish our comparison on 4 benchmarks, divided between visual domain adaptation (Office31~\cite{saenko2010adapting}, Office-Home~\cite{venkateswara2017deep}) and cross-domain fault diagnosis (TEP~\cite{montesuma2023multi} and CWRU). See table~\ref{tab:datasets} for further details.
\begin{table}[ht]
    \centering
    \caption{Overview of benchmarks used in our experiments.}
    \begin{tabular}{lcccccc}
         \toprule
         Benchmark & Backbone & Problem & \# Samples & \# Domains & \# Classes & \# Features  \\
         \midrule
         Office 31 & ResNet50 & Object Recognition & 3287 & 3 & 31 & 2048\\
         Office-Home & ResNet101 & Object Recognition & 15500 & 4 & 65 & 2048\\
         TEP & CNN &  Fault Diagnosis & 17289 & 6 & 29 & 128\\
         CWRU & MLP & Fault Diagnosis & 24000 & 3 & 10 & 256\\
         \bottomrule
    \end{tabular}
    \label{tab:datasets}
\end{table}

As with previous works on \gls{ot}-based \gls{msda}, we perform domain adaptation on pre-extracted features. As such, we pre-train a neural network (called backbone) on the concatenation of source domain data, then we use it to extract the features from each domain. For visual adaptation tasks, we use ResNets~\cite{he2016deep}, while for fault diagnosis, we use a CNN and a multi-layer perceptron, as in~\cite{montesuma2023learning,montesuma2023multi}. We summarize our results in table~\ref{tab:msda_results}.

\begin{table}[ht]
    \centering
    \caption{Classification accuracy of domain adaptation methods divided by benchmark. $\star$, $\dag$, $\ddag$ and $\S$ denote results from~\cite{montesuma2023learning,montesuma2023multi,liu2023co,castellon2023federated}, respectively.}
    \begin{subtable}{0.305\linewidth}
        \resizebox{\linewidth}{!}{\begin{tabular}{lcccc>{\columncolor[gray]{0.9}}c}
        \toprule
        Algorithm & Ar & Cl & Pr & Rw & \scriptsize{Avg. $\uparrow$} \\
        \midrule
        ResNet101 & 72.90 & 62.20 & 83.70 & 85.00 & 75.95\\
        \midrule
        M$^{3}$SDA & 71.13 & 61.41 & 80.18 & 80.64 & 73.34\\
        LtC-MSDA & 74.52 & 60.56 & 85.52 & 83.63 & 76.05\\
        KD3A & 73.80 & 63.10 & 84.30 & 83.50 & 76.17 \\
        Co-MDA$^{\ddag}$ & 74.40 & 64.00 & 85.30 & 83.90 & 76.90\\
        \midrule
        WJDOT & 74.28 & 63.80 & 83.78 & 84.52 & 76.59 \\
        WBT & 75.72 & 63.80 & 84.23 & 84.63 & 77.09 \\
        DaDiL-E & \textbf{77.16} & 64.95 & 85.47 & 84.97 & \underline{78.14}\\
        DaDiL-R & \underline{75.92} & \underline{64.83} & 85.36 & \textbf{85.32} & 77.86\\
        \midrule
        GMM-WBT & 75.31 & 64.26 & \textbf{86.71} & \underline{85.21} & 77.87\\
        GMM-DaDiL & \textbf{77.16} & \textbf{66.21} & \underline{86.15} & \textbf{85.32} & \textbf{78.81} \\
        \bottomrule
    \end{tabular}}
        \caption{Office-Home.}
    \end{subtable}\hfill
    \begin{subtable}{0.265\linewidth}
        \resizebox{\linewidth}{!}{\begin{tabular}{lccc>{\columncolor[gray]{0.9}}c}
            \toprule
            Algorithm & A & D & W & \scriptsize{Avg. $\uparrow$} \\
            \midrule
            ResNet50 & 67.50 & 95.00 & 96.83 & 86.40\\
            \midrule
            M$^{3}$SDA & 66.75 & 97.00 & 96.83 & 86.86 \\
            LtC-MSDA & 66.82 & 100.00 & 97.12 & 87.98\\
            KD3A & 65.20 & \textbf{100.0} & 98.70 & 87.96 \\
            Co-MDA & 64.80 & \underline{99.83} & 98.70 & 87.83\\
            \midrule
            WJDOT & 67.77 & 97.32 & 95.32 & 86.80 \\
            WBT & 67.94 & 98.21 & 97.66 & 87.93\\
            DaDiL-E & 70.55 & \textbf{100.00} & \underline{98.83} & 89.79\\
            DaDiL-R & \underline{70.90} & \textbf{100.00} & \underline{98.83} & \textbf{89.91}\\
            \midrule
            GMM-WBT & 70.13 & 99.11 & 96.49 & 88.54 \\
            GMM-DaDiL & \textbf{72.47} & \textbf{100.0} & \textbf{99.41} & \textbf{90.63} \\
            \bottomrule
        \end{tabular}}
        \caption{Office 31.}
    \end{subtable}\hfill
    \begin{subtable}{0.4\linewidth}
        \resizebox{\linewidth}{!}{\begin{tabular}{lccc>{\columncolor[gray]{0.9}}c}
            \toprule
            Algorithm & \scriptsize{A} & \scriptsize{B} & \scriptsize{C} & \scriptsize{Avg. $\uparrow$} \\
            \midrule
            MLP$^{\star}$ & 70.90 $\pm$ 0.40 & 79.76 $\pm$ 0.11 & 72.26 $\pm$ 0.23 & 74.31 \\
            \midrule
            M3SDA & 56.86 $\pm$ 7.31 & 69.81 $\pm$ 0.36 & 61.06 $\pm$ 6.35 & 62.57\\
            LTC-MSDA$^{\star}$ & 82.21 $\pm$ 8.03 & 75.33 $\pm$ 5.91 & 81.04 $\pm$ 5.45 & 79.52\\
            KD3A$^{\S}$ & 81.02 $\pm$ 2.92 & 78.04 $\pm$ 4.05 & 74.64 $\pm$ 5.65 & 77.90 \\
            Co-MDA & 62.66 $\pm$ 0.96 & 55.78 $\pm$ 0.85 & 76.35 $\pm$ 0.79 & 64.93\\
            \midrule
            WJDOT & 99.96 $\pm$ 0.02 & 98.86 $\pm$ 0.55 & \textbf{100.0 $\pm$ 0.00 }& 99.60\\
            WBT$^{\star}$ & 99.28 $\pm$ 0.18 & 79.91 $\pm$ 0.04 & 97.71 $\pm$ 0.76 & 92.30\\
            DaDiL-R$^{\star}$ & \underline{99.86 $\pm$ 0.21} & \underline{99.85 $\pm$ 0.08} & \textbf{100.00} $\pm$ \textbf{0.00}  & \underline{99.90}\\
            DaDiL-E$^{\star}$ & 93.71 $\pm$ 6.50 & 83.63 $\pm$ 4.98 & \underline{99.97} $\pm$ \underline{0.05}  & 92.33\\
            \midrule
            GMM-WBT & \textbf{100.00 $\pm$ 0.00} & \textbf{99.95 $\pm$ 0.07} & \textbf{100.00 $\pm$ 0.00} & \textbf{99.98}\\
            GMM-DaDiL & \textbf{100.00 $\pm$ 0.00} & \textbf{99.95 $\pm$ 0.04} & \textbf{100.00 $\pm$ 0.00} & \textbf{99.98}\\
            \bottomrule
        \end{tabular}}
        \caption{CWRU.}
    \end{subtable}\\
    \begin{subtable}{\linewidth}
        \resizebox{\linewidth}{!}{\begin{tabular}{lcccccc>{\columncolor[gray]{0.9}}c}
        \toprule
        Algorithm & \scriptsize{Mode 1} & \scriptsize{Mode 2} & \scriptsize{Mode 3} & \scriptsize{Mode 4} & \scriptsize{Mode 5} & \scriptsize{Mode 6} & \scriptsize{Avg. $\uparrow$} \\
        \midrule
        CNN$^{\dag}$ & 80.82 $\pm$ 0.96 & 63.69 $\pm$ 1.71 & 87.47 $\pm$ 0.99 & 79.96 $\pm$ 1.07 & 74.44 $\pm$ 1.52 & 84.53 $\pm$ 1.12 & 78.48\\
        \midrule
        M$^{3}$SDA$^{\dag}$ & 81.17 $\pm$ 2.00 & 61.61 $\pm$ 2.71 & 79.99 $\pm$ 2.71 & 79.12 $\pm$ 2.41 & 75.16 $\pm$ 3.01 & 78.91 $\pm$ 3.24 & 75.99\\
        KD3A$^{\S}$ & 72.52 $\pm$ 3.04 & 18.96 $\pm$ 4.54 & 81.02 $\pm$ 2.40 & 74.42 $\pm$ 1.60 & 67.18 $\pm$ 2.37 & 78.22 $\pm$ 2.14 & 65.38\\
        Co-MDA & 64.56 $\pm$ 0.62 & 35.99 $\pm$ 1.21 & 79.66 $\pm$ 1.36 & 72.06 $\pm$ 1.66 & 66.33 $\pm$ 0.97 & 78.91 $\pm$ 1.87 & 66.34\\
        \midrule
        WJDOT & 89.06 $\pm$ 1.34 & 75.60 $\pm$ 1.84 & \textbf{89.99 $\pm$ 0.86} & \underline{89.38 $\pm$ 0.77} & 85.32 $\pm$ 1.29 & 87.43 $\pm$ 1.23 & 86.13\\
        WBT$^{\dag}$ & \textbf{92.38 $\pm$ 0.66} & 73.74 $\pm$ 1.07 & 88.89 $\pm$ 0.85 & \underline{89.38 $\pm$ 1.26} & 85.53 $\pm$ 1.35 & 86.60 $\pm$ 1.63 & 86.09\\
        DaDiL-R$^{\ddag}$ & 91.97 $\pm$ 1.22 & \textbf{77.15 $\pm$ 1.32} & 85.41 $\pm$ 1.69 & \textbf{89.39 $\pm$ 1.03} & 84.49 $\pm$ 1.95 & \textbf{88.44 $\pm$ 1.29} & \underline{86.14}\\
        DaDiL-E$^{\ddag}$ & 90.45 $\pm$ 1.02 & \underline{77.08 $\pm$ 1.21} & 86.79 $\pm$ 2.14 & 89.01 $\pm$ 1.35 & 84.04 $\pm$ 3.16 & 87.85 $\pm$ 1.06 &  85.87\\
        \midrule
        GMM-WBT & \underline{92.23 $\pm$ 0.70} & 71.81 $\pm$ 1.78 & 84.72 $\pm$ 1.92 & 89.28 $\pm$ 1.55 & \textbf{87.51 $\pm$ 1.73} & 82.49 $\pm$ 1.81 & 84.67\\
        GMM-DaDiL & 91.72 $\pm$ 1.41 & 76.41 $\pm$ 1.89 & \underline{89.68 $\pm$ 1.49} & 89.18 $\pm$ 1.17 & \underline{86.05 $\pm$ 1.46} & \underline{88.02 $\pm$ 1.12} & \textbf{86.85}\\
        \bottomrule
        \end{tabular}}
        \caption{TEP.}
    \end{subtable}
    \label{tab:msda_results}
\end{table}

First, \gls{ot}-based methods generally outperform other methods in \gls{msda}. Overall, shallow \gls{da} methods solve a simpler task compared to deep \gls{da} methods, as they do not need to update the encoder network during adaptation. Second, the \gls{gmm}-\gls{ot} framework generally improves over using empirical \gls{ot}. For instance, in the CWRU benchmark, \gls{gmm}-\gls{wbt} largely outperforms \gls{wbt}~\cite{montesuma2021icassp,montesuma2021cvpr}. Furthermore, \gls{gmm}-\gls{dadil} outperforms its empirical counterpart on all benchmarks, as well as \gls{gmm}-\gls{wbt}. This point further illustrates the power of dictionary learning in \gls{msda}. Note that, in table~\ref{tab:msda_results} (d), \gls{gmm}-\gls{dadil} manages to have the best average adaptation performance across domains without actually being the best on any single domain. As a consequence, \gls{gmm}-\gls{dadil} enjoys better stability, with respect distribution shift, than previous methods.

\subsection{Lighter, Better, Faster Domain Adaptation}\label{sec:exp_ablations}

\begin{figure}[ht]
    \centering
    \begin{subfigure}{0.3\linewidth}
        \includegraphics[width=\linewidth]{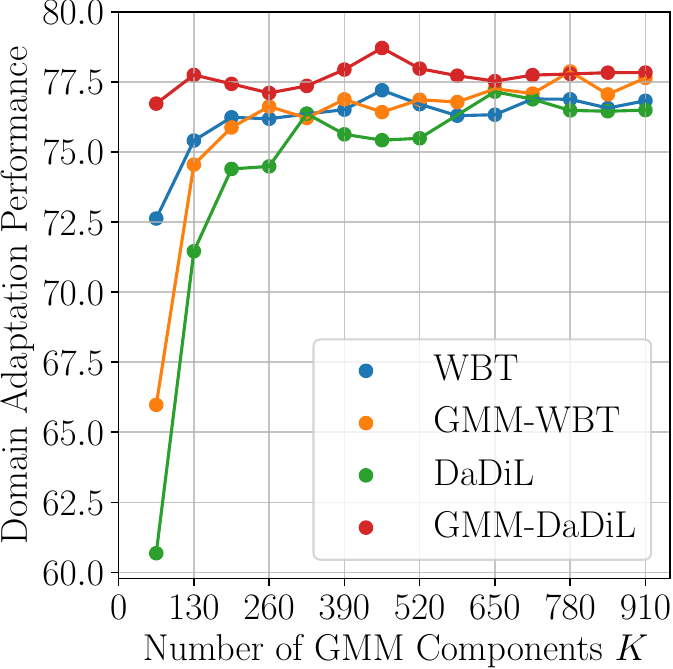}
        \caption{Lighter.}
        \label{fig:perf_analysis}
    \end{subfigure}\hfill
    \begin{subfigure}{0.3\linewidth}
        \includegraphics[width=\linewidth]{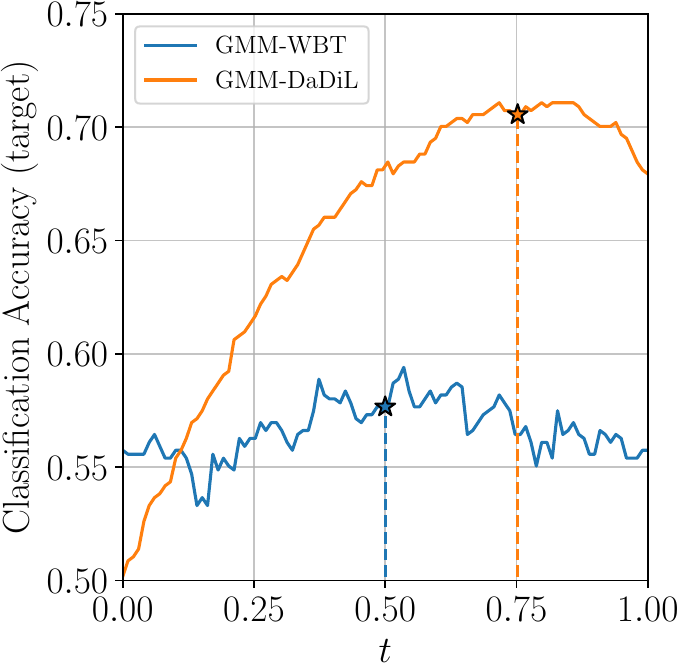}
        \caption{Better.}
        \label{fig:ablation}
    \end{subfigure}\hfill
    \begin{subfigure}{0.3\linewidth}
        \includegraphics[width=\linewidth]{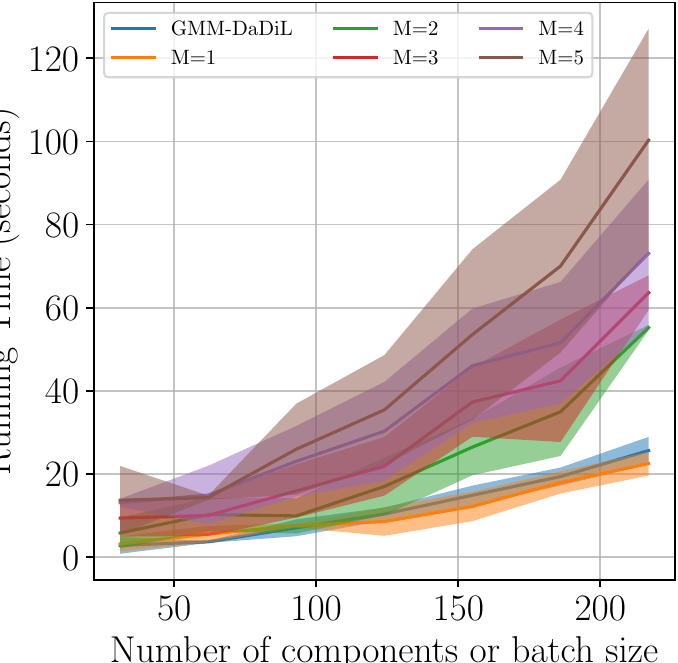}
        \caption{Faster.}
        \label{fig:time_analysis}
    \end{subfigure}
    \caption{\textbf{Lighter, Better, Faster}. In (a), we analyse the performance of interpolations $\mathcal{B}((\lambda_{0},1-\lambda_{0});\mathcal{Q}_{S})$, $\mathcal{Q}_{S} = \{Q_{S_{1}}, Q_{S_{2}}\}$ and $\mathcal{B}((\lambda_{0},1-\lambda_{0});\mathcal{P})$ with learned $\mathcal{P} = \{P_{1}, P_{2}\}$ for \gls{gmm}-\gls{wbt} and \gls{gmm}-\gls{dadil}. In (b), we analyse the efficiency of barycenter-based methods under an increasing number of \gls{gmm} components (number of samples for \gls{dadil} and \gls{wbt}). \gls{gmm}-\gls{dadil} has state-of-the-art performance even for the extreme case where $K = 65$. In (c), we compare the running time of \gls{gmm}-\gls{dadil} with that of \gls{dadil}, as a function of number of components $K$ and batch size $n_{b}$, respectively. This figure illustrates the speedup of \gls{gmm}-\gls{dadil} as the number of samples in \gls{dadil} (and hence, $M = \lceil \nicefrac{n}{n_{b}} \rceil$) increases. Circles represent the average over $5$ independent runs, while the error bars show $2$ times the standard deviation.}
    \label{fig:additionalexperiments}
\end{figure}
Our first experiment illustrates why \gls{gmm}-\gls{dadil} is \textbf{lighter} than previous barycenter-based algorithms, such as \gls{dadil}. In this context, a lighter model needs less parameters to achieve a certain domain adaptation performance. We rank \gls{gmm}-\gls{ot} models by the number of components $K$, and empirical models by the number of samples $n$ in their support. Note that these parameters regulate the complexity of these algorithms. We use the adaptation task $(Cl,Pr$,$Rw)\rightarrow Ar$ from Office-Home for our analysis. We show a comparison in figure~\ref{fig:additionalexperiments} (a). From this figure, we see that \gls{gmm}-\gls{dadil} surpasses all other methods over the entire range $K \in \{65, 130, \cdots, 910\}$. Especially, its empirical counterpart, \gls{dadil}, needs a large number of samples for accurately represent probability measures. Curiously, the performance of \gls{gmm}-\gls{wbt} and \gls{wbt} are quite similar. Indeed, recent studies~\cite{montesuma2023multiicassp} show that Wasserstein barycenters are effective in compressing probability measures with respect the number of their samples. As a result, in this adaptation task, the \gls{gmm} version of \gls{wbt} has similar performance to the empirical version.

Our second experiment illustrates why \gls{gmm}-\gls{otda} provides a \textbf{better} framework for \gls{msda}. We use the adaptation $(D,W)\rightarrow A$ in the Office-31 benchmark as the basis of our experiment. Note that \gls{gmm}-\gls{dadil} reconstructs the target domain via a barycenter $\mathcal{B}(\lambda_{T},\mathcal{P})$, where $\lambda_{T}$ and $\mathcal{P}$ are learned parameters. We thus ablate the learning of $\lambda_{T}$ and $\mathcal{P}$, i.e., we compare it to $\lambda_{T} = (\lambda_{0}, 1 - \lambda_{0})$, $\lambda_{0} \in [0, 1]$ and $\mathcal{Q}_{S} = \{Q_{S_{\ell}}\}_{\ell=1}^{N_{S}}$. This generates a series of measures parametrized by $\lambda_{0}$. To further match $\mathcal{B}(\lambda_{T},\mathcal{Q}_{S})$ with $Q_{T}$, we transport it to $Q_{T}$ through eq.~\ref{eq:matched_gmm}. Note that this corresponds to performing \gls{gmm}-\gls{wbt} with a barycenter calculated with $\lambda_{T}$. The overall experiment is shown in figure~\ref{fig:additionalexperiments} (b). While the performance of \gls{gmm}-\gls{wbt} remains approximately stable, that of \gls{gmm}-\gls{dadil} grows as we move closer to $\lambda_{T}^{\star}$ learned by dictionary learning (blue star). Overall, the interpolation space generated by atoms better captures the distributional shift occurring on the target domain.

Our third experiment shows that \gls{gmm}-\gls{dadil} is \textbf{faster} than \gls{dadil}. We plot the running time of these methods on the Office 31 benchmark, for the $(D,W)\rightarrow A$ adaptation task. The variables that influence the complexity of \gls{gmm} and empirical \gls{dadil} are the number of components $K$ and the batch size $n_{b}$, respectively. For \gls{gmm}-\gls{dadil}, we simply measure its running time for $5$ independent runs of the algorithm (blue curve) for each $K \in \{31, 62, \cdots, 217\}$. For \gls{dadil}, we set $n_{b} \in \{31, 62, \cdots, 217\}$, and set $n = M \times n_{b}$, where $M$ is the number of mini-batches. We measure the performance over $5$ independent runs as well. Other than these parameters, we fix $N_{iter} = 50$ and $C = 3$. As shown in Figure~\ref{fig:additionalexperiments} (c), the running time of \gls{gmm}-\gls{dadil} and \gls{dadil} are essentially equivalent for $M = 1$. For $M > 1$, we have a speedup that is proportional to $M$.

In our fourth experiment, we use the $(B, C) \rightarrow A$ adaptation task of CWRU. We are interested in visualizing the evolution of atoms and reconstructions with respect \gls{dadil} and \gls{gmm}-\gls{dadil} iterations. We visualize this evolution through UMAP~\cite{mcinnes2018umap}, i.e., we concatenate the data from \gls{dadil}'s atoms, i.e., $\mathbf{x}_{i}^{(P_{c,it})}$, so that these are jointly embedded into $\mathbb{R}^{2}$. For \gls{gmm}-\gls{dadil}, we concatenate the mean parameters, i.e., $\mathbf{m}_{i}^{(P_{c,it})}$. We summarize our results in figure~\ref{fig:visualization_dadil}. Overall, as shown in figure~\ref{fig:visualization_dadil} (a -- d), \gls{gmm}-\gls{dadil} optimization is more stable than that of \gls{dadil}, especially since we do not use mini-batches. This remark is also evidenced in the reconstructions in figure~\ref{fig:visualization_dadil} (e -- f). 

\begin{figure}[ht]
    \centering
    \begin{subfigure}{0.2\linewidth}
        \includegraphics[width=\linewidth]{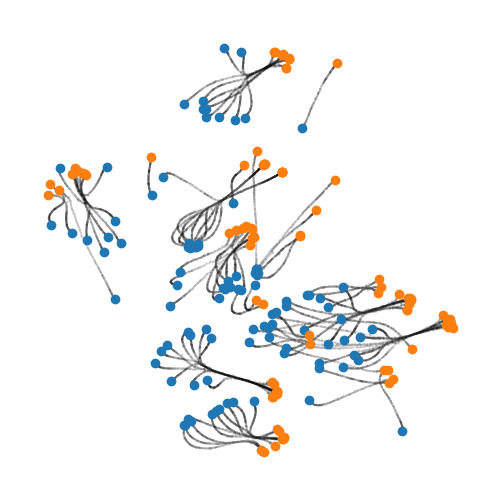}
        \caption{$P_{1,it}$}
    \end{subfigure}\hfill
    \begin{subfigure}{0.2\linewidth}
        \includegraphics[width=\linewidth]{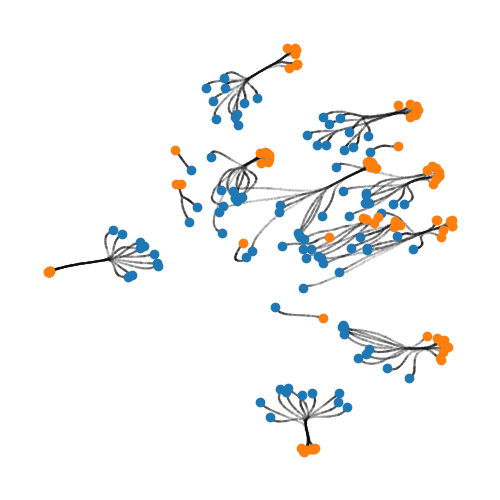}
        \caption{$P_{2,it}$}
    \end{subfigure}\hfill
    \begin{subfigure}{0.2\linewidth}
        \includegraphics[width=\linewidth]{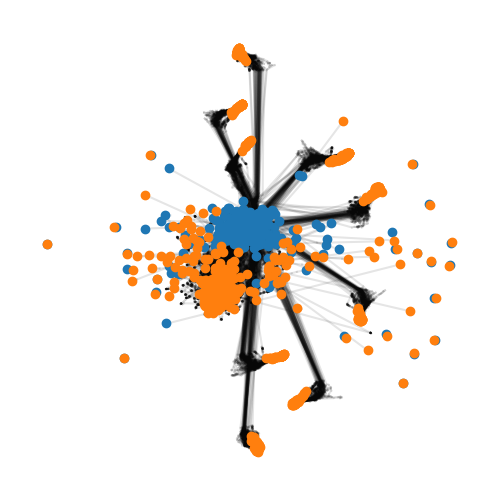}
        \caption{$\hat{P}_{1,it}$}
    \end{subfigure}\hfill
    \begin{subfigure}{0.2\linewidth}
        \includegraphics[width=\linewidth]{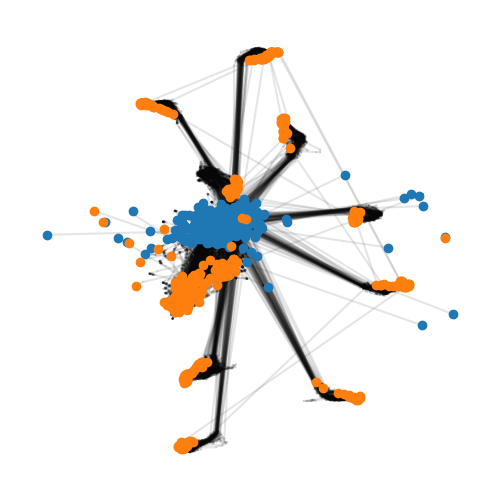}
        \caption{$\hat{P}_{2,it}$}
    \end{subfigure}\\
    \begin{subfigure}{0.2\linewidth}
        \includegraphics[width=\linewidth]{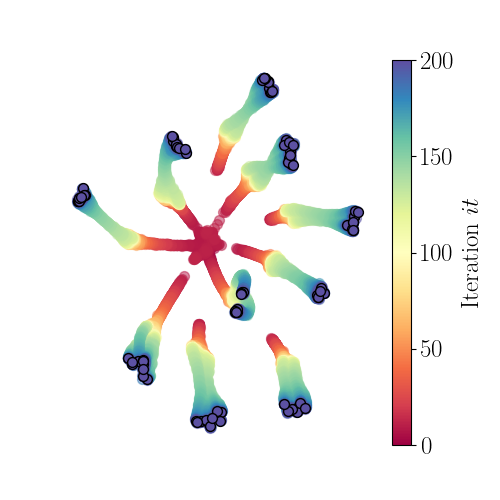}
        \caption{$\mathcal{B}(\lambda_{T};\mathcal{P}_{it})$}
    \end{subfigure}\hfill
    \begin{subfigure}{0.2\linewidth}
        \includegraphics[width=\linewidth]{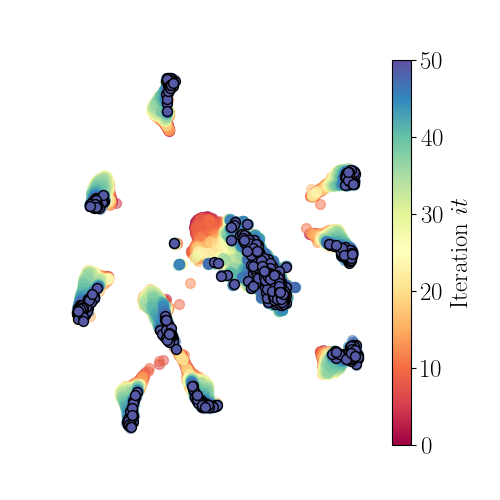}
        \caption{$\mathcal{B}(\lambda_{T};\hat{\mathcal{P}}_{it})$}
    \end{subfigure}\hfill
    \begin{subfigure}{0.2\linewidth}
        \includegraphics[width=\linewidth]{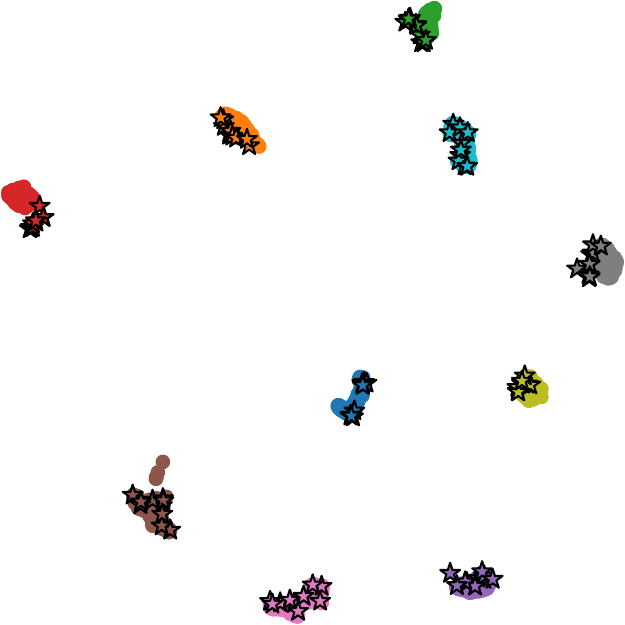}
        \caption{$\mathcal{B}(\lambda_{T};\mathcal{P}^{\star})$}
    \end{subfigure}\hfill
    \begin{subfigure}{0.2\linewidth}
        \includegraphics[width=\linewidth]{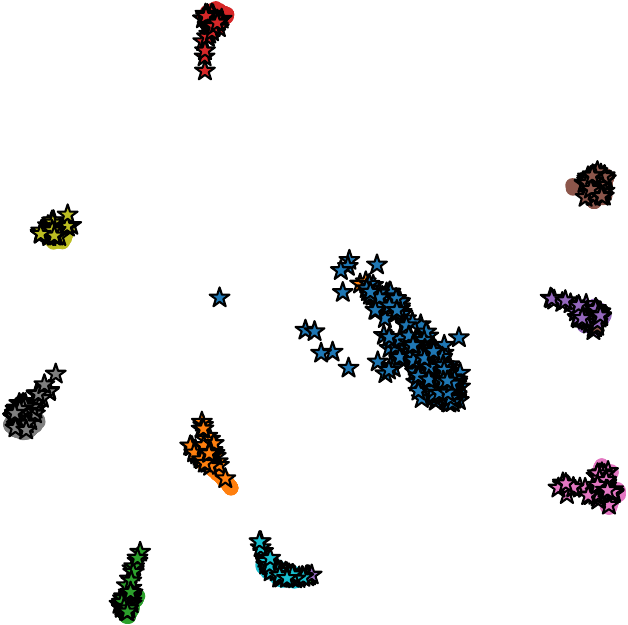}
        \caption{$\mathcal{B}(\lambda_{T};\hat{\mathcal{P}}^{\star})$}
    \end{subfigure}
    \caption{From (a-d), we show the trajectory of atom distributions for \gls{gmm}-\gls{dadil} (a, b) and \gls{dadil} (c, d). Blue and orange points represent the initializations and final vlaues for atoms at convergence. In (e, f), we show the trajectory of barycentric reconstructions for the target domain for these two methods. In (g, h), we show the reconstructions alongside target domain data at convergence.}
    \label{fig:visualization_dadil}
\end{figure}
\section{Conclusion}\label{sec:conclusion}

In this work, we propose a novel framework for MSDA, using GMM-OT~\cite{delon2020wasserstein}. Especially, we propose a novel algorithm for calculating Wasserstein barycenters of \glspl{gmm} (Algorithm~\ref{alg:gmmot_bary}). Based on this algorithm, we propose two new strategies for MSDA: \gls{gmm}-\gls{wbt} and \gls{gmm}-\gls{dadil} (Algorithm~\ref{alg:gmm_dadil}). The first method determines a labeled \gls{gmm} on the target domain by transporting the barycenter of source domain \glspl{gmm} towards the target. The second strategy uses dictionary learning to express each \gls{gmm} in \gls{msda} as the barycenter of learned \glspl{gmm}. Overall, we propose methods that are \textbf{lighter, better, faster} than previous empirical \gls{ot} methods in \gls{msda}.

\bibliographystyle{unsrt}
\bibliography{references}




\end{document}